\PassOptionsToPackage{dvipsnames}{xcolor}
\documentclass[final,12pt]{WLgo} % Include author names

\makeatletter
\renewcommand{\@jmlrpages}{} 

\renewcommand{\@jmlrvolume}{}
\renewcommand{\@jmlrissue}{}
\renewcommand{\@jmlryear}{}

\def\ps@jmlrtps{
  \let\@mkboth\@gobbletwo
  \def\@oddhead{}
  \def\@oddfoot{\@titlefoot} % 하단(Footer) 저작권 표시는 유지하려면 남김, 지우려면 {}로 비움
  \def\@evenhead{}
  \def\@evenfoot{\@oddfoot}
}

\def\ps@jmlrps{%
  \let\@mkboth\@gobbletwo
  \def\@oddhead{\hfill {\small\scshape \@shorttitle} \hfill}%
  \def\@evenhead{\hfill {\small\scshape \@shorttitle} \hfill}%
  
  \def\@oddfoot{\hfill \small\rmfamily \thepage \hfill}%
  \def\@evenfoot{\hfill \small\rmfamily \thepage \hfill}%
}

\makeatother

\usepackage{microtype}
\usepackage{subcaption}
\usepackage{amsmath}
\usepackage{booktabs} % for professional tables
\usepackage{hyperref}
\usepackage{mathtools}
\usepackage{enumitem}
\usepackage{bbold}
\usepackage{pgfplots}
\pgfplotsset{compat=1.18}
\usepackage{tikz}
\usepackage{tikz-cd}
\usetikzlibrary{arrows.meta,shapes,positioning,calc,fit,chains,shapes.geometric, decorations.text}
\usepackage[capitalize,noabbrev]{cleveref}
\usepackage{booktabs}  
\usepackage{multirow}   
\usepackage{graphicx}

\title[WL Go Categorical]{Weisfeiler and Lehman Go Categorical}
\coltauthor{%
 \Name{Seongjin Choi} \Email{jincs10@postech.ac.kr}\\
 \addr POSTECH, Gyeongbuk, Korea \& Center for Geometry and Physics, Institute for Basic Science (IBS), 79 Jigok-ro 127beon-gil, Nam-gu, Pohang, Gyeongbuk, KOREA 37673
 \AND
 \Name{Gahee Kim} \Email{gaheekim@kaist.ac.kr}\\
 \addr Kim Jaechul Graduate School of AI, Korea Advanced Institute of Science and Technology (KAIST), 85 Hoegi-ro, Dongdaemun-gu, Seoul, KOREA 02455
 \AND
\Name{Se-Young Yun} \Email{yunseyoung@kaist.ac.kr}\\
 \addr Kim Jaechul Graduate School of AI, Korea Advanced Institute of Science and Technology (KAIST), 85 Hoegi-ro, Dongdaemun-gu, Seoul, KOREA 02455
}

\begin{document}

\maketitle

\begin{abstract}%
While lifting map has significantly enhanced the expressivity of graph neural networks, extending this paradigm to hypergraphs remains fragmented. To address this, we introduce the categorical Weisfeiler-Lehman framework, which formalizes lifting as a functorial mapping from an arbitrary data category to the unifying category of graded posets. When applied to hypergraphs, this perspective allows us to systematically derive Hypergraph Isomorphism Networks, a family of neural architectures where the message passing topology is strictly determined by the choice of functor. We introduce two distinct functors from the category of hypergraphs: an incidence functor and a symmetric simplicial complex functor. While the incidence architecture structurally mirrors standard bipartite schemes, our functorial derivation enforces a richer information flow over the resulting poset, capturing complex intersection geometries often missed by existing methods. We theoretically characterize the expressivity of these models, proving that both the incidence-based and symmetric simplicial approaches subsume the expressive power of the standard Hypergraph Weisfeiler-Lehman test. Extensive experiments on real-world benchmarks validate these theoretical findings.
\end{abstract}

\begin{keywords}%
Topological Deep Learning, Categorical Deep Learning, Graph Neural Networks, Hypergraph Neural Networks, Weisfeiler-Lehman Test, Expressive Power
\end{keywords}

\section{Introduction}
\label{sec:introduction}

The expressivity of graph neural networks (GNNs) has long been bounded by the 1-Weisfeiler-Lehman (1-WL) test \citep{weisfeiler1968reduction, xu2018powerful, morris2019weisfeiler}. To transcend this limitation, recent research has gravitated toward higher-order graph learning, where graphs are lifted into richer domains---such as simplicial complexes \citep{bodnar2021weisfeiler} or regular cell complexes \citep{bodnar2021weisfeiler_cw, giusti2024topological}---to capture structures like cliques and cycles that are invisible to standard message passing \citep{arvind2020weisfeiler, chen2020can}. These methods have successfully unified geometric intuition, establishing a new standard for expressive graph learning \citep{eblisimplicial, wu2023simplicial, hajij2022simplicial}. 

However, extending this paradigm to hypergraphs \citep{bretto2013hypergraph} remains a significant open challenge. Unlike graphs, where cliques naturally lift to simplices, hypergraphs consist of arbitrary subsets of nodes with no canonical geometric realization. Consequently, current hypergraph neural networks \citep{feng2019hypergraph} often rely on graph-based reductions, such as the bipartite expansion \citep{feng2019hypergraph, huang2021unignn, feng2024hypergraph, zhangimproved}.

In this work, we propose a unified, rigorous framework to bridge this gap. We argue that the fragmentation of current methods stems from the lack of a common mathematical language to describe lifting. We introduce category theory \citep{mac1998categories} not as an abstraction, but as a practical engineering tool to solve the hypergraph lifting problem. We introduce the Categorical WL (CatWL) framework, a generalized isomorphism testing machinery that operates on a universal domain, graded poset \citep{stanley2011enumerative}. Our core insight is that while input data types (graphs, hypergraphs, simplicial complexes) vary, their structural essence can always be faithfully encoded as a graded poset. By defining lifting maps \citep{bodnar2021weisfeiler_cw} as functors from the category of data $\mathcal{C}$ to the category of graded posets, we provide a rigorous guarantee that the lifting process preserves the isomorphism class of the original object. This functorial perspective allows us to systematically derive neural architectures. We demonstrate that the choice of functor dictates the message passing topology, leading to a family of models we term Categorical Message Passing Networks (CatMPNs). Specifically, we investigate two distinct functors: The Incidence Poset Functor, which treats hyperedges as dimension 1 objects, recovering the standard bipartite expansion \citep{huang2021unignn, feng2024hypergraph, zhangimproved} and the symmetric simplicial complex \citep{choi2025hypergraph} functor, which treats hyperedges as fully structured simplices, and a cardinality-aware message passing scheme. 
Our contributions are summarized as follows: \\
- Categorical Framework: We propose the CatWL framework, which unifies lifting strategies \citep{bodnar2021weisfeiler, bodnar2021weisfeiler_cw} under the lens of category theory.\\
- Universal Domain: We identify graded posets as the optimal universal domain for WL tests, capable of subsuming hypergraphs \citep{xu2018powerful, feng2024hypergraph}, simplicial complexes \citep{bodnar2021weisfeiler}, and regular cell complexes \citep{bodnar2021weisfeiler_cw} within a single computational structure. \\
- Neural Architectures: We derive CatMPNs as the neural network version of our CatWL test. We theoretically characterize the expressivity of two specific instances. \\
- Empirical Validation: We provide extensive experiments verifying that CatMPN, in the category of hypergraphs, outperforms baseline methods on real-world benchmarks.

\section{Why Category Theory}
\label{sec:why_category}

Recent advancements in GNNs have largely been driven by the pursuit of higher expressivity beyond the 1-WL limitation \citep{morris2019weisfeiler, bodnar2021weisfeiler, bodnar2021weisfeiler_cw}. While this has led to powerful lifting map strategies for graphs \citep{bodnar2021weisfeiler, bodnar2021weisfeiler_cw}, extending these methods to hypergraphs remains fragmented due to structural incompatibilities.

\paragraph{The Lifting Mismatch} In this section, we identify the limitations of current graph-based reductions for higher-order data and propose a unified framework. We argue that graded posets serve as the optimal universal domain for lifting, and that category theory---specifically, functoriality---provides the necessary rigorous condition to ensure these lifts preserve isomorphism. In the domain of graphs, the strategy for breaking the 1-WL barrier is well-established: we upgrade standard GNNs by imposing new geometric or algebraic constructions on the graph. Approaches such as Simplicial WL (SWL) and Cellular WL (CWL) lift the input graph into higher dimensions—for instance, by constructing clique complexes or effectively gluing 2-dimensional discs to fill graph cycles. These constructions unlock a richer set of interactions, allowing models to explicitly process topological features like boundary and co-boundary adjacencies. However, these specific geometric methods are not applicable to hypergraphs. Unlike graphs, where the clique complex or cycle-filling offers a canonical topological enrichment, a hypergraph consists of arbitrary, irregular subsets of nodes. There exists no equivalent canonical gluing procedure for hypergraphs. Consequently, current attempts to apply WL-like procedures often rely on the bipartite expansion, which treats hyperedges merely as a secondary node type. This flattens the higher-order structure, obscuring the intrinsic containment and intersection geometries critical for higher-order expressivity.

\paragraph{A Universal domain} A recurring strategy for surpassing standard WL is to lift a graph into a richer higher-order domain and perform refinement there. However, this landscape is currently fragmented: SWL targets simplicial complexes (via clique complexes), while CWL targets regular cell complexes (via attaching a 2-dimensional disc to each cycle in the graph). Because these methods map to distinct mathematical spaces, their resulting notions of neighborhoods, updates, and expressivity are difficult to compare within a single theoretical framework. This difficulty is even more pronounced for hypergraphs. To address this, we propose a unified domain. We establish the space of graded posets as a universal target capable of encoding the incidence structure of hypergraphs as well as the face relations of simplicial and regular cell complexes. By fixing the domain to graded posets, we ensure that the coloring update rule and its neural relaxation can be defined once and reused across input modalities, rather than effectively redesigned for every new lifting construction.

\paragraph{Functorial Lifting} With graded posets established as the unified domain, the remaining challenge is defining the lifting map itself. We argue that treating the data domain merely as a set of objects is insufficient. To rigorously capture the structural invariants required for isomorphism testing, we must view the domain as a category. Consequently, the notion of a lifting map is naturally upgraded to that of a functor. This perspective provides the necessary validity condition: functoriality. While heuristics like the clique complex implicitly satisfy this, functoriality explicitly constrains the design space for hypergraphs. It mandates that a lifting strategy must map morphisms of hypergraphs to morphisms of induced graded posets. This requirement guarantees that the lifting respects the symmetries of the input data, serving as a strict guide for designing canonical, isomorphism-preserving representations in the poset domain.

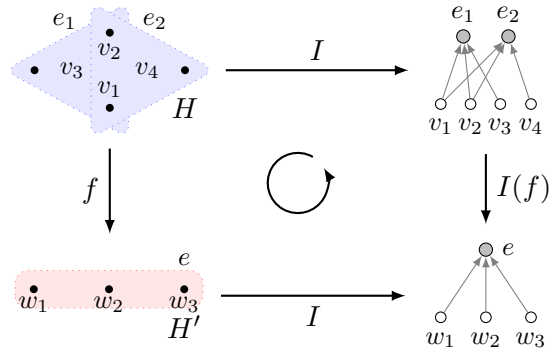
\begin{figure}[t]
    \centering
    
    \begin{tikzpicture}
        % Define styles for hypergraphs
        \tikzstyle{hg_node}=[circle, fill=black, inner sep=0pt, minimum size=3pt]
        \tikzstyle{hg_edge}=[draw=blue!50, fill=blue!10, rounded corners=5pt, ellipse, dotted, inner sep=10pt]
        
        % Define styles for posets (Hasse diagram)
        \tikzstyle{p_rank0}=[circle, draw=black, fill=white, inner sep=0pt, minimum size=4pt]
        \tikzstyle{p_rank1}=[circle, draw=black, fill=gray!50, inner sep=0pt, minimum size=5pt]
        \tikzstyle{p_line}=[thin, gray]

        % --- LEFT SIDE: Source Category (Hypergraphs) ---
        
        % Hypergraph X (Top Left)
        \node (X) at (0, 3) {
            \begin{tikzpicture}[scale=0.5]
                % Hyperedge e1 = {v1, v2, v3}
                \fill[hg_edge] (0.5,1.8) -- (-2.8,0) -- (0.5,-1.8) -- cycle;
                \fill[hg_edge] (-0.5,1.8) -- (2.8,0) -- (-0.5,-1.8) -- cycle; 
                \node[hg_node] (v1) at (0,-1) {};
                \node[hg_node] (v2) at (0,1) {};
                \node[hg_node] (v3) at (-2,0) {};
                \node[hg_node] (v4) at (2,0) {};
                
                \node at (-1.2, 1.3) {\small $e_1$}; 
                \node at (1.2, 1.3) {\small $e_2$}; 
                \node at (0, -0.5) {\small $v_1$};
                \node at (0, +0.5) {\small $v_2$};
                \node at (-1, 0) {\small $v_3$};
                \node at (1, 0) {\small $v_4$};
                \node at (2, -1) {$H$};
                
            \end{tikzpicture}
        };

        % Hypergraph Y (Bottom Left)
        % Y is a "collapsed" version (homomorphism target)
        \node (Y) at (0, 0) {
            \begin{tikzpicture}[scale=0.5]
                 % Hyperedge e1' = {u1, u2} (v2 and v3 collapsed to u2)
                \draw[hg_edge, fill=red!10, draw=red!50] (-2.5,-0.5) rectangle (2.5, 0.5);
                \node[hg_node] (u1) at (-2,0) {};
                \node[hg_node] (u2) at (2,0) {};
                \node[hg_node] (u3) at (0,0) {};
                \node at (2,0.8) {\small $e$};
                \node at (-2,-0.4) {\small $w_1$};
                \node at (0,-0.4) {\small $w_2$};
                \node at (2,-0.4) {\small $w_3$};
                \node at (2, -1) {$H'$};
            \end{tikzpicture}
        };

        % --- RIGHT SIDE: Target Category (Graded Posets) ---

        % Poset F(X) (Top Right)
        \node (FX) at (5, 3) {
            \begin{tikzpicture}[scale=0.6]
                % Rank 1 (Hyperedge)
                \node[p_rank1] (e1) at (0.5, 1.5) {};
                \node[p_rank1] (e2) at (1.5, 1.5) {};
                % Rank 0 (Nodes)
                \node[p_rank0] (pv1) at (0, 0) {};
                \node[p_rank0] (pv2) at (0.66, 0) {};
                \node[p_rank0] (pv3) at (1.32, 0) {};
                \node[p_rank0] (pv4) at (2, 0) {};
                % Incidence lines
                \draw[-latex, p_line] (pv1) -- (e1);
                \draw[-latex, p_line] (pv2) -- (e1);
                \draw[-latex, p_line] (pv3) -- (e1);
                \draw[-latex, p_line] (pv1) -- (e2);
                \draw[-latex, p_line] (pv2) -- (e2);
                \draw[-latex, p_line] (pv4) -- (e2);

                \node at (0.5, 2) {\small $e_1$}; 
                \node at (1.5, 2) {\small $e_2$}; 
                \node at (0, -0.5) {\small $v_1$};
                \node at (0.66, -0.5) {\small $v_2$};
                \node at (1.32, -0.5) {\small $v_3$};
                \node at (2, -0.5) {\small $v_4$};
            \end{tikzpicture}
        };

        % Poset F(Y) (Bottom Right)
        \node (FY) at (5, 0) {
            \begin{tikzpicture}[scale=0.6]
                % Rank 1 (Hyperedge)
                \node[p_rank1] (e) at (1, 1.5) {};
                % Rank 0 (Nodes)
                \node[p_rank0] (pu1) at (0, 0) {};
                \node[p_rank0] (pu2) at (1, 0) {};
                \node[p_rank0] (pu3) at (2, 0) {};
                % Incidence lines
                \draw[-latex, p_line] (pu1) -- (e);
                \draw[-latex, p_line] (pu2) -- (e);
                \draw[-latex, p_line] (pu3) -- (e);
                \node at (1.5,1.5) {\small $e$};
                \node at (0,-0.5) {\small $w_1$};
                \node at (1,-0.5) {\small $w_2$};
                \node at (2,-0.5) {\small $w_3$};
            \end{tikzpicture}
        };

        % --- ARROWS (Morphisms & Functors) ---

        % Morphism f: X -> Y
        \draw[-latex, thick] (X) -- node[left] {$f$} (Y);

        % Functor F (Top)
        \draw[-latex, thick] (X) -- node[above] {$I$} (FX);

        % Functor F (Bottom)
        \draw[-latex, thick] (Y) -- node[below] {$I$} (FY);

        % Morphism F(f): F(X) -> F(Y)
        \draw[-latex, thick] (FX) -- node[right] {$I(f)$} (FY);

               % --- Counter-Clockwise Commutativity Arrow ---
        % Center (2.5, 1.5), Radius 0.4
        % Starts at 60 degrees (top right), arcs CCW to -240 (top left)
        \draw[-latex, thick] (2.5, 1.5) ++(60:0.4) arc (60:390:0.4);
    \end{tikzpicture}
    
    \caption{$I$ is a functor from the category of hypergraphs to the category of graded posets by sending a hypergraph to its incidence poset. $I$ send a morphism of hypergraphs $f$ to a morphism of graded posets $I(f)$ and $I$ preserves the identity morphism. This implies $I$ is a lifting map.}
    \label{fig:functor_commutes}
\end{figure}

\section{Categorical WL test}
Having established the need for a unified domain and a functorial lifting mechanism, we now formally define the CatWL framework. In this section, we introduce graded posets as our universal computing domains, formalize the categorical machinery required for lifting, and finally define the CatWL test as a functor-guided coloring refinement procedure.

\subsection{Graded WL Test}
\label{subsec:graded_posets}

We begin by defining our target domain. The choice of graded posets \citep{stanley2011enumerative} is not arbitrary; it generalizes several discrete higher-order representations used in graph learning. In particular, incidence posets of hypergraphs, simplicial complexes, and regular cell complexes all induce natural graded poset structures when equipped with suitable dimension functions. Graded posets provide a flexible common domain capable of representing all these objects.

\begin{definition}[Graded Poset]\label{def:graded_poset}A graded poset $P$ \citep{stanley2011enumerative} is a poset $(P,<)$ with dimension function $\mathrm{dim}: P \to \mathbb{N}$ satisfying 
\begin{enumerate}[leftmargin=*,topsep=0pt,itemsep=-0.5ex]
    \item $p \leq p'$ implies $\mathrm{dim}(p) \leq \mathrm{dim}(p')$
    \item if there is no $q$ such that $p < q < p'$, we denote $p \prec p'$. In this case, $\mathrm{dim}(p')-\mathrm{dim}(p)=1$. 
\end{enumerate}
\end{definition}

Intuitively, a graded poset describes a collection of elements equipped with a hierarchical containment relation and an associated notion of dimension. The partial order specifies which elements are contained in or refine others, while the dimension function assigns an integer level to each element. As detailed in Appendix~\ref{app:Formaldenofliftingfunctors}, incidence posets, simplicial complexes, and regular cell complexes naturally induce graded poset structures. Since a graded poset generalizes the incidence poset of a graph, which is used for the WL test \citep{weisfeiler1968reduction}, it is natural to extend the WL test for graded posets.

\begin{definition}[Graded Weisfeiler-Lehman (GWL)]
\label{def:gwl}Given a graded poset $P$, we define a coloring refinement algorithm as follows:
\begin{enumerate}[leftmargin=*,topsep=0pt,itemsep=-0.5ex]
    \item $c^P_0$ is constant coloring on $P$
    \item When $c^P_t$ is given, $c^P_{t+1}(\sigma)$ is updated by colors of 4-adjacencies of $\sigma$ at iteration $t$ using perfect HASH function
    \begin{equation}
            c^P_{t+1}(\sigma) = \mathrm{HASH}\Bigl(c^P_t(\sigma),c^P_t(\mathcal{B}(\sigma)), c^P_t(\mathcal{C}(\sigma)), c^P_t(\mathcal{N}_{\downarrow}(\sigma)), c^P_t(\mathcal{N}_{\uparrow}(\sigma)) \Bigl)    
    \end{equation}
    where the colors of the 4-adjacencies of $\sigma$ are 
    \begin{itemize}[leftmargin=*,topsep=0pt,itemsep=-0.5ex]
        \item $c^P_t(\mathcal{B}(\sigma)) = \{\!\!\{c^P_t(\tau) \mid \tau \prec \sigma \}\!\!\}$, colors of boundary adjacencies of $\sigma$ 
        \item $c^P_t(\mathcal{C}(\sigma)) = \{\!\!\{c^P_t(\tau) \mid \sigma \prec \tau \}\!\!\}$, colors of coboundary adjacencies of $\sigma$
        \item $c^P_t(\mathcal{N}_{\downarrow}(\sigma)) = \{\!\!\{(c^P_t(\sigma'), c^P_t(\tau)) \mid \tau \prec \sigma,\sigma', \sigma\neq \sigma' \}\!\!\}$, colors of lower adjacencies of $\sigma$
        \item $c^P_t(\mathcal{N}_{\uparrow}(\sigma)) = \{\!\!\{(c^P_t(\sigma'), c^P_t(\tau)) \mid \sigma,\sigma' \prec \tau, \sigma\neq \sigma' \}\!\!\}$, colors of upper adjacencies of $\sigma$
    \end{itemize}
We say two graded posets $P, P'$ are isomorphic by the GWL test if the coloring histograms of $c^P_t, c^{P'}_t$ are equal for any $t \in \mathbb{N}$.
\end{enumerate}
\end{definition}

\begin{figure}[t]
    \centering
    {%
    \begin{tikzpicture}
        % Define styles for hypergraphs
        \tikzstyle{hg_node}=[circle, fill=black, inner sep=0pt, minimum size=3pt]
        \tikzstyle{hg_edge}=[draw=blue!50, fill=blue!10, rounded corners=5pt, ellipse, dotted, inner sep=10pt]
        
        % Define styles for posets (Hasse diagram)
        \tikzstyle{p_rank0}=[circle, draw=black, fill=white, inner sep=0pt, minimum size=4pt]
        \tikzstyle{p_rank1}=[circle, draw=black, fill=gray!50, inner sep=0pt, minimum size=5pt]
        \tikzstyle{p_line}=[thin, gray]

        % Poset F(X) (Top Right)
\node (FX) at (5, 3) {
    \begin{tikzpicture}
        % --- 1. 다이어그램 노드 (수직 간격 축소: 0, 1.0, 2.0) ---
        % Dim 2 (최상단)
        \node[p_rank1, BlueViolet] (e0) at (1, 2) {}; 
        
        % Dim 1 (중간)
        \node[p_rank1, Apricot] (e1) at (0.5, 1) {};
        \node[p_rank1, YellowOrange] (e2) at (1.5, 1) {};
        
        % Dim 0 (바닥)
        \node[p_rank0, Orchid] (pv1) at (0, 0) {};
        \node[p_rank0, VioletRed] (pv2) at (1, 0) {};
        \node[p_rank0, purple] (pv3) at (2, 0) {};

        % Incidence lines
        \draw[-latex, p_line] (e1) -- (e0);
        \draw[-latex, p_line] (e2) -- (e0);
        \draw[-latex, p_line] (pv1) -- (e1);
        \draw[-latex, p_line] (pv2) -- (e1);
        \draw[-latex, p_line] (pv2) -- (e2);
        \draw[-latex, p_line] (pv3) -- (e2);

        % 라벨 위치 조정
        \node at (2, 1) {$\sigma$};
        \node at (1.5, 2) {$\sigma_1$};
        \node at (0, 1) {$\sigma_2$};
        \node at (-0.5, 0) {$\sigma_3$};
        \node at (0.7, 0) {$\sigma_4$};
        \node at (2.5, 0) {$\sigma_5$};

        % --- 2. 수식 뭉치 (상단 라인 정렬) ---
        % anchor를 north west로 잡고, y좌표를 p1(e0)의 높이인 2.0에 맞춤
        % 약간의 미세조정이 필요하면 2.2 정도로 높이세요.
        \node[align=left, anchor=north west] at (3, 2.2) {
            $c(\mathcal{B}(\sigma)) = \{\!\!\{ \tikz\draw[fill=VioletRed, draw=black] (0,0) circle (2pt);, \tikz\draw[fill=purple, draw=black] (0,0) circle (2pt); \}\!\!\}$ \\[1ex]
            $c(\mathcal{C}(\sigma)) = \{\!\!\{ \tikz\draw[fill=BlueViolet, draw=black] (0,0) circle (2pt); \}\!\!\}$ \\[1ex]
            $c(\mathcal{N}_{\downarrow}(\sigma)) = \{\!\!\{ (\tikz\draw[fill=Apricot, draw=black] (0,0) circle (2pt);, \tikz\draw[fill=VioletRed, draw=black] (0,0) circle (2pt);) \}\!\!\}$ \\[1ex]
            $c(\mathcal{N}_{\uparrow}(\sigma)) = \{\!\!\{ (\tikz\draw[fill=Apricot, draw=black] (0,0) circle (2pt);, \tikz\draw[fill=BlueViolet, draw=black] (0,0) circle (2pt);) \}\!\!\}$
        };
    \end{tikzpicture}
};
    \end{tikzpicture}
    }
    \caption{Visualization of a single GWL refinement step on a sample graded poset $P$. The Hasse diagram depicts the covering relations $\prec$, where arrows point from covered elements to covering elements (e.g., $\sigma_4 \prec \sigma$). Nodes are colored according to the coloring $c$, and the 4-adjacency multisets for the element $\sigma$ are explicitly listed.}
    \label{fig:GWL}
\end{figure}
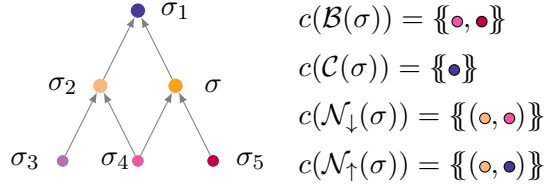

Figure~\ref{fig:GWL} illustrates the mechanism of the GWL coloring refinement. We consider a graded poset $P = {\sigma, \sigma_1, \dots, \sigma_5}$ with covering relations indicated by upward arrows. The specific containment structure includes $\sigma_2, \sigma \prec \sigma_1$; $\sigma_3, \sigma_4 \prec \sigma_2$; and $\sigma_4, \sigma_5 \prec \sigma$. The poset is initialized with a coloring $c$, assigning distinct colors to each element: $c(\sigma) = \tikz\draw[fill=YellowOrange, draw=black] (0,0) circle (2pt);$, $c(\sigma_1) = \tikz\draw[fill=BlueViolet, draw=black] (0,0) circle (2pt);$, $c(\sigma_2) = \tikz\draw[fill=Apricot, draw=black] (0,0) circle (2pt);$, $c(\sigma_3) = \tikz\draw[fill=Orchid, draw=black] (0,0) circle (2pt);$, $c(\sigma_4) = \tikz\draw[fill=VioletRed, draw=black] (0,0) circle (2pt);$, and $c(\sigma_5) = \tikz\draw[fill=purple, draw=black] (0,0) circle (2pt);$. During the refinement step, the color of $\sigma$ ($ \tikz\draw[fill=YellowOrange, draw=black] (0,0) circle (2pt); $) is updated by aggregating colors of 4-adjacencies of $\sigma$ in Figure~\ref{fig:GWL}.

% The boundary aggregation $c(\mathcal{B}(p))$ collects the multiset of colors from $p_4, p_5$, yielding $\{\!\!\{ \tikz\draw[fill=VioletRed, draw=black] (0,0) circle (2pt);, \tikz\draw[fill=purple, draw=black] (0,0) circle (2pt); \}\!\!\}$. More notably, the lower adjacency aggregation $c(\mathcal{N}_{\downarrow}(p))$ captures lateral interactions mediated by shared faces. Since $p$ shares the face $p_4$ ($ \tikz\draw[fill=VioletRed, draw=black] (0,0) circle (2pt); $) with its neighbor $p_2$ ($ \tikz\draw[fill=Apricot, draw=black] (0,0) circle (2pt); $), the aggregation collects the tuple of their colors:
% $c(\mathcal{N}_{\downarrow}(p)) = \left\{\!\!\left\{ \left( \tikz\draw[fill=Apricot, draw=black] (0,0) circle (2pt);, \tikz\draw[fill=VioletRed, draw=black] (0,0) circle (2pt); \right) \right\}\!\!\right\}$.
% Note that since

\subsection{Categories and Functors}
\label{subsec:cat_functor}

To apply the GWL machinery to arbitrary data types like hypergraphs, we must formally connect data to a graded poset. We achieve this via category theory.

\begin{definition}[Category]
\label{def:category}A category $\mathcal{C}$ \citep{mac1998categories} consists of a class of objects $\mathcal{C}$ and a class of morphisms $\text{Hom}(\mathcal{C})$. For every pair of objects $X, Y \in \mathcal{C}$, there exists a set of morphisms $\mathcal{C}(X, Y)$, denoting all morphisms from $X$ to $Y$. The category must satisfy two axioms:
\begin{enumerate}[leftmargin=*,topsep=0pt,itemsep=-0.5ex]
    \item Composition: For any morphisms $f \in \mathcal{C}(X, Y)$ and $g \in \mathcal{C}(Y, Z)$, there exists a composite morphism $g \circ f \in \mathcal{C}(X, Z)$ that is associative.
    \item Identity: For every object $X$, there exists an identity morphism $\text{Id}_X \in \mathcal{C}(X, X)$ such that $f \circ \text{Id}_X = f$ and $\text{Id}_Y \circ f = f$ for any $f \in \mathcal{C}(X,Y)$.
\end{enumerate}
We say two objects $X,Y$ isomorphic if there exists $f \in \mathcal{C}(X,Y), g \in \mathcal{C}(Y,X)$ satisfying $g \circ f = \text{Id}_X, f \circ g = \text{Id}_Y$.
\end{definition}

Common examples include the category of graphs ($\mathbf{Gph}$), the category of hypergraphs ($\mathbf{Hyp}$), the categories of simplicial and regular cell complexes, and the category of graded posets ($\mathbf{Poset}$) (formal definitions and proofs are deferred to Appendix~\ref{app:Formaldenofliftingfunctors}).

\begin{definition}[Functor]
\label{def:functor}Let $\mathcal{C}$ and $\mathcal{D}$ be categories. A functor $F: \mathcal{C} \to \mathcal{D}$ \citep{mac1998categories} is a mapping that assigns to each object $X \in \mathcal{C}$ an object $F(X) \in \mathcal{D}$, and to each morphism $f \in \mathcal{C}(X,Y)$ in $\mathcal{C}$ a morphism $F(f) \in \mathcal{D}(F(X), F(Y))$ in $\mathcal{D}$. To be a valid functor, $F$ must preserve the categorical structure:
\begin{enumerate}[leftmargin=*,topsep=0pt,itemsep=-0.5ex]
    \item Preservation of Composition: $F(g \circ f) = F(g) \circ F(f)$ for all composable morphisms $f, g$.
    \item Preservation of Identity: $F(\text{id}_X) = \text{id}_{F(X)}$ for every object $X \in \mathcal{C}$.
\end{enumerate}
\end{definition}

Functor generalizes topological liftings since if $X \cong Y$ are isomorphic in $\mathcal{C}$, then $F(X) \cong F(Y)$ should be isomorphic in $\mathcal{D}$ by preservation of composition and identity. In our framework, we specifically focus on functors where the target category is $\mathcal{D} = \mathbf{Poset}$. A functor $F: \mathcal{C} \to \mathbf{Poset}$ represents a systematic transformation of a data class $\mathcal{C}$ into graded posets. 

\subsection{Categorical WL Test}
\label{subsec:cat_wl_test}

Given a category $\mathcal{C}$, determining whether two objects in $\mathcal{C}$ are isomorphic or not is a hard task. We solve this task by lifting objects in $\mathcal{C}$ to graded posets via a functor $F: \mathcal{C} \to \mathbf{Poset}$. With the functor $F$, we can lift GWL on $\mathbf{Poset}$ to $\mathcal{C}$ to determine whether two objects in $\mathcal{C}$ are isomorphic. This is the $F$-CatWL test we will describe.

\begin{definition}[CatWL]
\label{def:cat_wl}
Let $F: \mathcal{C} \to \mathbf{Poset}$ be a functor. For any object $X \in \mathcal{C}$, the $F$-Categorical WL ($F$-CatWL) proceeds in two steps:
\begin{enumerate}[leftmargin=*,topsep=0pt,itemsep=-0.5ex]
    \item Associate an object $X$ in $\mathcal{C}$ to its graded poset $F(X)$ via the functor $F$.
    \item Apply GWL (Definition~\ref{def:gwl}) to $F(X)$.
\end{enumerate}
Two objects $X, X' \in \mathcal{C}$ are said to be isomorphic by \textbf{$F$-CatWL test}, denoted $X \cong_{F} X'$, if and only if $F(X), F(X')$ are isomorphic by GWL.
\end{definition}

Since any functor $F: \mathcal{C} \to \mathbf{Poset}$ preserves isomorphism structure, we obtain the implication chain $X \cong X' \implies F(X) \cong F(X') \implies F(X) \cong_{\text{GWL}} F(X')$. This last equivalence defines our $F$-CatWL distinguishability, denoted $X \cong_F X'$. Therefore, the $F$-CatWL test provides a necessary algorithmic condition for isomorphism in $\mathcal{C}$. By anchoring the refinement in $F$, we effectively pull back the rigorous adjacency structure of the graded poset domain to facilitate isomorphism testing in the source category.

\section{Hypergraph Isomorphism Networks}
\label{sec:hypergraph_isomorphism_networks}

We have defined the $F$-CatWL, which relies on a discrete color refinement procedure on the lifted graded poset by the functor $F$. While theoretically robust, the use of $\mathrm{HASH}$ is computationally intractable for continuous feature learning. In this section, we relax these constraints to define $F$-Categorical Message Passing Networks, neural architectures that correspond to the $F$-CatWL using continuous vector representations. We derive distinct hypergraph neural network architectures---collectively termed $F$-Hypergraph Isomorphism Networks ($F$-HIN)---each with unique inductive biases and expressive capabilities.

\subsection{Categorical Message Passing Networks}

The transition from WL tests to neural networks follows the standard paradigm of geometric deep learning: replacing discrete colors with Euclidean feature vectors and replacing $\mathrm{HASH}$ with general message functions, aggregators.

\begin{definition}[$F$-CatMPN]\label{def:f_catmpn}Let $F: \mathcal{C} \to \mathbf{Poset}$ be a functor. For any object $X \in \mathcal{C}$, the $F$-Categorical Message Passing Network ($F$-CatMPN) operates as follows:
\begin{enumerate}[leftmargin=*,topsep=0pt,itemsep=-0.5ex]
    \item Associate an object $X$ in $\mathcal{C}$ to its graded poset $F(X)$ via the functor $F$.
    \item For each element $\sigma \in F(X)$, maintain a feature vector $h^X_t(\sigma) \in \mathbb{R}^d$ at iteration $t$. Compute four types of messages corresponding to the features of 4-adjacencies:
    \begin{align}
        m^X_{\mathcal{B},t}(\sigma) &:= \mathrm{AGG}_{\tau \prec \sigma} \left( M_{\mathcal{B}}\left(h^X_t(\sigma), h^X_t(\tau)\right) \right) \label{eq:msg_boundary} \\
        m^X_{\mathcal{C},t}(\sigma) &:= \mathrm{AGG}_{\sigma \prec \tau} \left( M_{\mathcal{C}}\left(h^X_t(\sigma), h^X_t(\tau)\right) \right) \label{eq:msg_coboundary} \\
        m^X_{\mathcal{N}_{\downarrow},t}(\sigma) &:= \mathrm{AGG}_{\tau \prec \sigma, \sigma'} \left( M_{\mathcal{N}_{\downarrow}}\left(h^X_t(\sigma), h^X_t(\sigma'), h^X_t(\tau)\right) \right) \label{eq:msg_lower} \\
        m^X_{\mathcal{N}_{\uparrow},t}(\sigma) &:= \mathrm{AGG}_{\sigma, \sigma' \prec \tau} \left( M_{\mathcal{N}_{\uparrow}}\left(h^X_t(\sigma), h^X_t(\sigma'), h^X_t(\tau)\right) \right) \label{eq:msg_upper}
    \end{align}
\end{enumerate}
Here, $M_{\mathcal{B}}, M_{\mathcal{C}}, M_{\mathcal{N}_{\downarrow}}, M_{\mathcal{N}_{\uparrow}}$ are learnable message functions, $\mathrm{AGG}$ is an aggregation function, and $U$ is a learnable update function.
\end{definition}

The relationship between this neural approximation and the theoretical test is formalized by the following theorem, which extends the standard WL-MPN equivalence \citep{xu2018powerful, bodnar2021weisfeiler, bodnar2021weisfeiler_cw} to the categorical setting.

\begin{theorem}[Expressivity of $F$-CatMPN]
\label{thm:expressivity}
Let $F: \mathcal{C} \to \mathbf{Poset}$ be a functor. The $F$-CatWL is at least as powerful as the $F$-CatMPN. If the update and aggregation functions are injective and the network has sufficient depths and widths, the $F$-CatMPN has the same expressive power as $F$-CatWL.
\end{theorem}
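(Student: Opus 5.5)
The plan mirrors the standard GIN/MPSN argument of \citet{xu2018powerful} and \citet{bodnar2021weisfeiler}, run on the lifted graded poset $F(X)$. Since both $F$-CatWL and $F$-CatMPN begin by applying the same functor $F$, everything reduces to comparing GWL (Definition~\ref{def:gwl}) with the message passing of Definition~\ref{def:f_catmpn} on arbitrary graded posets $P=F(X)$, $P'=F(X')$. I take the update to be $h_{t+1}(\sigma)=U\bigl(h_t(\sigma),m_{\mathcal{B},t}(\sigma),m_{\mathcal{C},t}(\sigma),m_{\mathcal{N}_\downarrow,t}(\sigma),m_{\mathcal{N}_\uparrow,t}(\sigma)\bigr)$. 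I also take $h_0$ to be constant, matching the constant initial coloring $c_0$.

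\textbf{Step 1: GWL is at least as powerful.} I will show by induction on $t$ that for all $\sigma\in P$ and $\sigma'\in P'$ (all colorings taken jointly over $P\sqcup P'$), $c_t(\sigma)=c_t(\sigma')$ implies $h_t(\sigma)=h_t(\sigma')$. The base case holds because both initializations are constant. For the inductive step, suppose $c_{t+1}(\sigma)=c_{t+1}(\sigma')$. Injectivity of HASH gives equality of $c_t(\sigma)$ and of the four multisets. For $\mathcal{N}_\downarrow$ and $\mathcal{N}_\uparrow$ these are multisets of pairs $(c_t(\sigma''),c_t(\tau))$, and for $\mathcal{B}$ and $\mathcal{C}$ multisets of single colors. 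By the induction hypothesis, each multiset of colors maps via $c\mapsto h$ to a multiset of feature vectors. The map is well defined because equal colors give equal features.

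The message inputs $M(h_t(\sigma),h_t(\sigma''),h_t(\tau))$ depend only on these features together with $h_t(\sigma)$. Since $\mathrm{AGG}$ is a function of the multiset, all four messages coincide, and so $h_{t+1}(\sigma)=h_{t+1}(\sigma')$. Histogram equality of $c_t$ then transfers through the function $c_t\mapsto h_t$ to equality of any multiset readout of $h_t$. Hence whenever $X\cong_F X'$, the $F$-CatMPN cannot distinguish $X$ from $X'$.

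\textbf{Step 2: the converse under injectivity.} Here I show the reverse implication $h_t(\sigma)=h_t(\sigma')\Rightarrow c_t(\sigma)=c_t(\sigma')$. If $M$ and $\mathrm{AGG}$ are injective on multisets and $U$ is injective, then $h_{t+1}$ is an injective function of exactly the tuple that HASH receives. Inducting again, $h_t$ and $c_t$ induce the same partition of $P\sqcup P'$ at every $t$. Composed with an injective multiset readout, this gives equality of the two distinguishing powers.

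\textbf{Main obstacle: realizing injectivity with neural components.} The hardest step is justifying that such injective components exist with finite width. I will use the countable-feature lemma of \citet{xu2018powerful}, which asserts the existence of $f$ such that $\sum_{x\in S} f(x)$ is injective on bounded-size multisets from a countable set. The features are countable because the initial features are constant and each step applies functions to finitely many values.

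For the pair-valued adjacencies I will first encode $(h_t(\sigma''),h_t(\tau))$ injectively into a single countable set, for instance by a pairing function. I then apply the sum-decomposition. The multiset sizes are bounded since $P$ and $P'$ are finite. Finally, the universal approximation theorem of MLPs (\emph{sufficient width}) realizes these maps on the finite set of values that actually occur. Taking depth at least the number of GWL rounds needed to stabilize, which is at most $|P|+|P'|$, gives the \emph{sufficient depth} clause.
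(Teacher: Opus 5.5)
Your proposal is correct and follows essentially the same route as the paper's proof. Both arguments run two inductions on $t$: the CatWL coloring refines the CatMPN features via injectivity of HASH, and, when the message, aggregation and update functions are injective, the features refine the coloring. Your version is more careful than the paper's, which only invokes the universal approximation theorem for injectivity: you make explicit the sum-decomposition lemma on countable feature sets, the encoding of the pair-valued lower/upper messages, the joint coloring over $F(X)\sqcup F(X')$ with the transfer to histograms, and the stabilization bound that justifies ``sufficient depth.''
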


In particular, for a functor $F: \mathbf{Hyp} \to \mathbf{Poset}$, we define the $F$-Hypergraph Isomorphism Network ($F$-HIN) as the specific instantiation of $F$-CatMPN utilizing the message, aggregation, and update functions detailed in Appendix~\ref{app:FHINarchitecture}.

\subsection{Two Architectures from Two Functors}
\label{subsec:two_functors}

The power of our categorical framework lies in its modularity: the message passing architecture is determined entirely by the choice of the lifting functor $F$. Distinct functors induce distinct graded poset structures, thereby generating topologically unique message passing schemes. In this section, we introduce two primary functors---the incidence poset functor ($I$) and the symmetric simplicial complex functor ($S$)---and analyze the specific neural architectures they induce.

\begin{figure}[t]
    \centering
    % Define styles globally
    \tikzset{
        hg_node/.style={circle, fill=black, inner sep=0pt, minimum size=3pt},
        hg_edge/.style={draw=blue!50, fill=blue!10, rounded corners=5pt, ellipse, dotted, inner sep=10pt},
        p_rank0/.style={circle, draw=black, fill=white, inner sep=0pt, minimum size=4pt},
        p_rank1/.style={circle, draw=black, fill=gray!50, inner sep=0pt, minimum size=5pt},
        p_rank2/.style={circle, draw=black, fill=gray!70, inner sep=0pt, minimum size=5pt},
        p_line/.style={thin, gray, -latex}
    }

    % --- SUBFIGURE 1 ---
    % Reduced width to bring the next figure closer
    \begin{minipage}[b]{0.26\linewidth}
        \centering
        \begin{tikzpicture}[scale=0.8]
            % Tighter clipping/bounding box implicit by coordinate choice
            \fill[hg_edge] (0.5,1.5) -- (-2.5,0) -- (0.5,-1.5) -- cycle;
            \fill[hg_edge] (-0.5,1.5) -- (2.5,0) -- (-0.5,-1.5) -- cycle; 
            \node[hg_node] (v1) at (0,-0.8) {};
            \node[hg_node] (v2) at (0,0.8) {};
            \node[hg_node] (v3) at (-1.8,0) {};
            \node[hg_node] (v4) at (1.8,0) {};
            \node at (-1, 1.2) {\scriptsize $e_1$}; 
            \node at (1, 1.2) {\scriptsize $e_2$}; 
            \node at (0, -1) {\scriptsize $v_1$};
            \node at (0, 1) {\scriptsize $v_2$};
            \node at (-2.2, 0) {\scriptsize $v_3$};
            \node at (2.2, 0) {\scriptsize $v_4$};
            \node at (0, -2.0) {$H$};
        \end{tikzpicture}
    \end{minipage}%
    \hfill
    % --- SUBFIGURE 2 ---
    \begin{minipage}[b]{0.24\linewidth}
        \centering
        \begin{tikzpicture}[scale=0.8]
            \node[p_rank1] (e1) at (0.5, 1.5) {};
            \node[p_rank1] (e2) at (1.5, 1.5) {};
            \node[p_rank0] (pv1) at (0, 0) {};
            \node[p_rank0] (pv2) at (0.66, 0) {};
            \node[p_rank0] (pv3) at (1.32, 0) {};
            \node[p_rank0] (pv4) at (2, 0) {};
            \draw[p_line] (pv1) -- (e1);
            \draw[p_line] (pv2) -- (e1);
            \draw[p_line] (pv3) -- (e1);
            \draw[p_line] (pv1) -- (e2);
            \draw[p_line] (pv2) -- (e2);
            \draw[p_line] (pv4) -- (e2);
            \node at (0.5, 1.9) {\scriptsize $e_1$}; 
            \node at (1.5, 1.9) {\scriptsize $e_2$}; 
            \node at (0, -0.4) {\scriptsize $v_1$};
            \node at (0.66, -0.4) {\scriptsize $v_2$};
            \node at (1.32, -0.4) {\scriptsize $v_3$};
            \node at (2, -0.4) {\scriptsize $v_4$};
            \node at (1, -1.2) {$I(H)$};
        \end{tikzpicture}
    \end{minipage}%
    \hfill
    % --- SUBFIGURE 3 ---
    \begin{minipage}[b]{0.48\linewidth}
        \centering
        \begin{tikzpicture}[scale=0.8]
            \node[p_rank2] (e1123) at (-2, 2.5) {};
            \node[p_rank2] (e2124) at (2, 2.5) {};
            \node[p_rank1] (e112) at (-3.5, 1.2) {};
            \node[p_rank1] (e113) at (-2, 1.2) {};
            \node[p_rank1] (e123) at (-0.5, 1.2) {};
            \node[p_rank1] (e224) at (0.5, 1.2) {};
            \node[p_rank1] (e214) at (2, 1.2) {};
            \node[p_rank1] (e212) at (3.5, 1.2) {};

            \node[p_rank0] (v3) at (-3, 0) {};
            \node[p_rank0] (v1) at (-1, 0) {};
            \node[p_rank0] (v2) at (1, 0) {};
            \node[p_rank0] (v4) at (3, 0) {};

            \draw[p_line] (v1) -- (e112); \draw[p_line] (v1) -- (e113); \draw[p_line] (v1) -- (e212); \draw[p_line] (v1) -- (e214);
            \draw[p_line] (v2) -- (e112); \draw[p_line] (v2) -- (e123); \draw[p_line] (v2) -- (e224); \draw[p_line] (v2) -- (e212);
            \draw[p_line] (v3) -- (e113); \draw[p_line] (v3) -- (e123);
            \draw[p_line] (v4) -- (e214); \draw[p_line] (v4) -- (e224);

            \draw[p_line] (e112) -- (e1123); \draw[p_line] (e113) -- (e1123); \draw[p_line] (e123) -- (e1123);
            \draw[p_line] (e212) -- (e2124); \draw[p_line] (e214) -- (e2124); \draw[p_line] (e224) -- (e2124);

            \node at (-3, -0.4) {\scriptsize $v_3$};
            \node at (-1, -0.4) {\scriptsize $v_1$};
            \node at (1, -0.4) {\scriptsize $v_2$};
            \node at (3, -0.4) {\scriptsize $v_4$};

            \node at (-4.2, 1.0) {\tiny $\{v_1, v_2\}_{e_1}$};
            \node at (3.2, 1.5) {\tiny $\{v_1, v_2\}_{e_2}$};
            
            \node at (-2, 2.9) {\tiny $\{v_1,v_2, v_3\}_{e_1}$};
            \node at (2, 2.9) {\tiny $\{v_1, v_2, v_4 \}_{e_2}$};
            \node at (0, -1.2) {$S(H)$};
        \end{tikzpicture}
    \end{minipage}

    \caption{Comparison of lifting functors on a sample hypergraph $H$ (left). The incidence poset $I(H)$ (center) treats hyperedges as dimension 1 objects, while the symmetric simplicial complex $S(H)$ (right) explicitly represents all sub-relations.}
    \label{fig:twofunctors}
\end{figure}
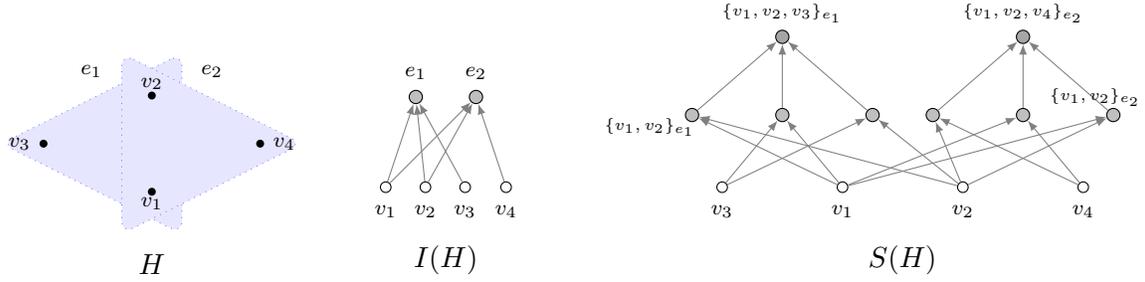

\subsubsection{The Incidence Poset Functor ($I$)}

The simplest way to lift a hypergraph is to treat hyperedges as 1-dimensional objects, nodes as 0-dimensional objects, and the covering relation $v \prec e$ if and only if the node $v$ is contained in the hyperedge $e$. This yields the incidence poset functor.

\begin{definition}[Incidence Poset Functor]\label{def:incidence_functor}The incidence poset functor $I: \mathbf{Hyp} \to \mathbf{Poset}$ is defined as follows: For an object $H=(V,E,f)$, $I(H)$ is the incidence poset $(V \amalg E, \prec)$ where relations are defined by node-edge incidence. For a morphism $\phi=(a,b): H \to H'$, the map $I(\phi)$ sends $v \in V$ to $a(v)$ and $e \in E$ to $b(e)$, preserving the poset structure.
\end{definition}

Under the functor $I$, the hypergraph is mapped to a 2-level graded poset with nodes at dimension 0 and hyperedges at dimension 1. The color refinement in the resulting graded poset corresponds to a natural bipartite interaction: Node updates aggregate colors from the co-boundary $\mathcal{C}(v)$ and upper-adjacency $\mathcal{N}_{\uparrow}(v)$ (as boundary and lower-adjacency sets are empty for dimension 0 nodes). Conversely, hyperedge updates aggregate colors from the boundary $\mathcal{B}(e)$ and lower-adjacency $\mathcal{N}_{\downarrow}(e)$ (as co-boundary and upper-adjacency sets are empty for dimension 1 hyperedges).

Structurally, the architecture of $I$-CatWL mirrors the bipartite interaction scheme underlying the standard Hypergraph Weisfeiler-Lehman (HWL) test \citep{feng2024hypergraph, zhangimproved}. However, deriving it via the incidence functor $I$ enforces a richer information flow. While HWL typically utilizes only a subset of possible interactions, our $I$-CatWL framework inherently aggregates the colors of 4-adjacencies (boundary, co-boundary, lower, and upper). Because the standard HWL update is effectively a restriction of this complete adjacency aggregation, our method captures a superset of the geometric information. Consequently, as we formally establish in the next theorem (proof deferred to Appendix~\ref{app:comparisonofHWLCatWL}), $I$-CatWL is not less powerful than HWL.

\begin{theorem}
\label{thm:ICatWLnotlesspowerfulGWL}
The $I$-CatWL is not less powerful than HWL.
\end{theorem}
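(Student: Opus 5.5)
The plan is a color-refinement domination argument: I will show, by induction on the iteration $t$, that the $I$-CatWL coloring on $I(H)$ refines the HWL coloring on $H$, after a suitable alignment of iterations. First I fix the HWL update in the form of \citet{feng2024hypergraph, zhangimproved}. Hyperedge colors are updated as $h_{t+1}(e)=\mathrm{HASH}(h_t(e),\{\!\!\{h_t(v)\mid v\in e\}\!\!\})$, or as $\mathrm{HASH}(\{\!\!\{h_t(v)\mid v\in e\}\!\!\})$ computed from the current node colors. Node colors are updated as $h_{t+1}(v)=\mathrm{HASH}(h_t(v),\{\!\!\{h_{t'}(e)\mid v\in e\}\!\!\})$. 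Under $I$, the elements of $I(H)$ are $V\amalg E$, with $v\prec e$ iff $v\in e$. Hence $\mathcal{B}(e)=\{v\mid v\in e\}$ and $\mathcal{C}(v)=\{e\mid v\in e\}$, so each HWL multiset literally appears as an argument of the GWL hash.

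The key claim is: for all $t$ and all $x,y\in V\amalg E$, possibly in different hypergraphs $H,H'$,
\[
c_{t'}(x)=c_{t'}(y)\implies h_t(x)=h_t(y),
\]
where $t'=2t$ or $t'=t$, depending on whether HWL alternates node and edge half-steps. I also need $c$ to separate dimension. This is automatic after one step, because $0$-elements have empty $\mathcal{B}$ and $1$-elements have empty $\mathcal{C}$ whenever they are incident to something. Isolated nodes and empty hyperedges need a remark; I will treat $\dim$ as part of the poset data, or handle them directly.

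The inductive step goes as follows. If $c_{t+1}(e)=c_{t+1}(e')$, then injectivity of HASH gives $c_t(e)=c_t(e')$ and $c_t(\mathcal{B}(e))=c_t(\mathcal{B}(e'))$ as multisets. Applying the induction hypothesis elementwise, these multisets map to equal multisets of HWL node colors, so $h_{t+1}(e)=h_{t+1}(e')$. The node case is symmetric, using $\mathcal{C}(v)$. The extra arguments $\mathcal{N}_\downarrow,\mathcal{N}_\uparrow$ are simply discarded; this is where "richer information" makes $I$-CatWL finer.

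Finally, I pass from refinement to histograms. If $H,H'$ are distinguished by HWL at step $t$, their HWL color histograms differ. Since the $I$-CatWL colors at step $t'$ determine the HWL colors through a well-defined map (by the claim), equal CatWL histograms at $t'$ would push forward to equal HWL histograms, which is a contradiction. Hence $H\not\cong_I H'$.

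The main obstacle is the bookkeeping in the inductive step: the HWL variant may use freshly updated node colors when updating edges within the same round. If so, I will prove the stronger statement with the shifted index $t'=2t$. That is, edge colors at CatWL step $2t+1$ dominate HWL edge colors at round $t$, using node colors from step $2t$ which dominate HWL round-$t$ node colors. Node colors at step $2t+2$ then dominate HWL round-$(t+1)$ node colors. Since stabilization makes the time shift irrelevant for the final comparison, this suffices.
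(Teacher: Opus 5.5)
Your induction is correct for what it establishes. With the one-step shift you already anticipate for separating dimensions, the $I$-CatWL coloring refines the HWL coloring, i.e.\ $I$-CatWL $\sqsubseteq$ HWL. This makes rigorous the main text's remark that HWL is a restriction of the 4-adjacency update, which the paper only asserts.

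It is not, however, what the theorem claims in the paper's own terms. Appendix~\ref{app:equivalenceofadjstrs} \emph{defines} ``$A$ is not less powerful than $A'$'' as the existence of $X,X'$ with $X\ncong_A X'$ but $X\cong_{A'}X'$. The paper's proof is accordingly a separating example: the pair $H,H'$ of Figure~\ref{fig:notlesspowerfulexamplehypergraphs}, HWL-equivalent by \citet{zhangimproved}, where $\mathcal{N}_{\downarrow}(e_2)$ is claimed to have two entries and $\mathcal{N}_{\downarrow}(e'_2)$ one. A refinement argument cannot produce such a pair. So, relative to the stated definition, your proposal does not touch the theorem's content.

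That gap cannot be closed, and the paper's example will not close it. By Definition~\ref{def:gwl}, $\mathcal{N}_{\downarrow}(\sigma)$ is indexed by pairs $(\sigma',\tau)$ with $\tau\prec\sigma,\sigma'$. So $e'_2=\{v'_2,v'_3\}\subset e'_1$ contributes both $(e'_1,v'_2)$ and $(e'_1,v'_3)$, giving $\{\!\!\{(\ast,\ast),(\ast,\ast)\}\!\!\}$, exactly as for $e_2$. The paper's count of one comes from deduplicating $\sigma'$.

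More to the point, your bookkeeping runs in reverse. Here take HWL, as in the cited works, to be type-aware simultaneous color refinement on the vertex--hyperedge incidence graph. The color $h_{2t+2}(e)$ determines $h_{2t}(e)$ and $\{\!\!\{(h_{2t}(v),\{\!\!\{h_{2t}(e'')\mid e''\ni v\}\!\!\})\mid v\in e\}\!\!\}$. Deleting one copy of $h_{2t}(e)$ from each inner multiset yields exactly the lower-adjacency multiset $\{\!\!\{(h_{2t}(e''),h_{2t}(v))\mid v\in e,\ e''\ni v,\ e''\neq e\}\!\!\}$. Symmetrically, $h_{2t+2}(v)$ determines the upper-adjacency multiset of $v$. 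Hence $h_{2t}$ refines $c_t$, and HWL $\sqsubseteq$ $I$-CatWL.

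Concretely, the map $v_1,v_4\mapsto v'_1$, $v_2,v_5\mapsto v'_2$, $v_3,v_6\mapsto v'_3$, $e_1,e_3\mapsto e'_1$, $e_2,e_4\mapsto e'_2$ makes $H$ a $2$-fold cover of the component of $H'$ on $\{v'_1,v'_2,v'_3\}$; the other component is an isomorphic copy. Coverings biject all four adjacency sets, so GWL colors are constant on fibres and $I(H)$, $I(H')$ have the same histograms.

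The two tests are therefore equivalent and no separating pair exists. The theorem is false under the paper's definition, and the ``strict subsumption'' claimed in the conclusion fails for $I$. What is true is precisely your statement, which is also the plain-English reading of ``not less powerful''.
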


\subsubsection{The Symmetric Simplicial Complex Functor ($S$)}

Motivated by recent work on higher-order learning on hypergraphs \citep{choi2025hypergraph}, we consider an alternative lifting strategy that explicitly preserves the internal subset geometry of each hyperedge. The symmetric simplicial complex functor $S$ maps a hypergraph to a structure where every subset of a hyperedge becomes a distinct geometric object, ordered by inclusion. This naturally assigns higher dimensions to larger hyperedges, capturing the full lattice of sub-relations. 

\begin{definition}[Symmetric Simplicial Complex Functor $S$]
\label{def:symmetric_functor}
Let $H=(V,E,f)$ be a hypergraph. The symmetric simplicial complex of $H$, denoted by $S(H)$, consists of 0-simplices $S(H)_0 := \{ (\{v\}, v) \mid v \in V(H) \}$ and $n$-simplices $S(H)_n := \{ (\{v_{i_0}, \dots, v_{i_n} \}, e) \mid \{v_{i_0}, \dots, v_{i_n}\} \subset f(e) \}$ for $n \ge 1$. For notation, we write $\{v_{i_0}, \dots, v_{i_n}\}_e$ for $(\{v_{i_0}, \dots, v_{i_n} \}, e)$ and $\{v\}_v$ for $(\{v\}, v)$. This set admits a graded poset structure defined by two conditions: (1) The partial order satisfies $\{v\}_v < \sigma_e$ if $v \in e$, and $\sigma_e < \tau_{e'}$ if $e=e'$ and the underlying set of $\sigma$ is a proper subset of $\tau$; (2) The dimension function is given by $\dim(\{v\}_v)=0$ and $\dim(\sigma_e)=n$ for $\sigma_e \in S(H)_n$.
\end{definition}

Figure~\ref{fig:twofunctors} illustrates symmetric simplicial complex: while the incidence poset $I(H)$ flattens the structure into a simple bipartite graph, the symmetric simplicial complex $S(H)$ reveals the intricate hierarchy of simplices nested within the same hyperedges, such as the triangle $\{v_1, v_2, v_3\}_{e_1}$ covering the edge $\{v_1, v_2\}_{e_1}$.

The intuition behind $S$ is to view a hyperedge $e = \{v_0, \dots, v_k\}$ as a formal $k$-simplex, generating a full face lattice of sub-relations. Unlike $I(H)$, where all hyperedges reside at dimension 1, $S(H)$ stratifies the structure by cardinality, placing a hyperedge of size $k+1$ at dimension $k$. The resulting color refinement operates across this multi-level hierarchy: Updates at dimension $k$ aggregate colors from the $(k+1)$-dimensional co-boundary (super-relations) and $(k-1)$-dimensional boundary (sub-relations). This allows for fine-grained interaction modeling, such as a node $v$ distinguishing specific pairwise relationships $\{v, w\}$ within a hyperedge, or hyperedges communicating via specific shared faces to capture intersection geometry.
Analogous to the incidence case, the following theorem confirms that this symmetric simplicial approach is not less powerful than HWL (proof detailed in Appendix~\ref{app:comparisonofHWLCatWL}).

\begin{theorem}
\label{thm:SCatWLnotlesspowerfulGWL}
The $S$-CatWL is not less powerful than HWL.
\end{theorem}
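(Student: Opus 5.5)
The plan is a refinement argument. I would show that GWL on $S(H)$, run for enough rounds, produces a coloring from which the HWL coloring of $H$ can be read off. Then equal GWL histograms for $S(H)$ and $S(H')$ force equal HWL histograms for $H$ and $H'$, which is the content of Theorem~\ref{thm:SCatWLnotlesspowerfulGWL}. The HWL rules I use are the bipartite ones recalled before Theorem~\ref{thm:ICatWLnotlesspowerfulGWL}: $h_{t+1}(v)=\mathrm{HASH}(h_t(v),\{\!\!\{h_t(e)\mid v\in e\}\!\!\})$ and $h_{t+1}(e)=\mathrm{HASH}(h_t(e),\{\!\!\{h_t(v)\mid v\in e\}\!\!\})$.

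First, two bookkeeping facts about GWL. \textbf{Monotonicity:} $c_{s+1}$ refines $c_s$, because the old color is an argument of $\mathrm{HASH}$. \textbf{Local structure is visible to colors:} from $c_1(\sigma)$ one reads off the number of boundary and coboundary elements of $\sigma$. Hence within a few rounds the color of $\sigma$ determines:
\begin{itemize}[leftmargin=*,topsep=0pt,itemsep=-0.5ex]
\item whether $\sigma$ is a vertex $\{v\}_v$, i.e. has empty boundary;
\item whether $\sigma$ is a top simplex $f(e)_e$, i.e. has empty coboundary;
\item its dimension, since an $n$-simplex $\sigma_e$ with $n\ge 1$ has exactly $n+1$ boundary elements.
\end{itemize}
Let $k$ be the maximal dimension occurring, $K=k+1$, and fix a delay $D=2K+2$. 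The structural fact used throughout is that all simplices $\sigma_e$ with $\dim\sigma_e\ge 1$ for the same hyperedge $e$ form a connected face lattice with unique top $f(e)_e$. Information therefore travels between any such simplex and the top in at most $k$ steps, and between a top and its vertices in at most $k$ steps.

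Induction claim: for every $t$, $c_{tD}(\{v\}_v)$ determines $h_t(v)$, and for every $\sigma_e$ with $\dim\sigma_e\ge1$, $c_{tD}(\sigma_e)$ determines $h_t(e)$. The base case $t=0$ is trivial.

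\emph{Hyperedge step.} Start at the top simplex $f(e)_e$. Iterating boundaries $|e|-1$ times reaches the vertices $\{v\}_v$ with $v\in e$. Each vertex is reached along exactly $(|e|-1)!$ maximal chains, and $|e|$ is known from the dimension. So after $|e|-1$ further rounds, the color of $f(e)_e$ encodes the multiset $\{\!\!\{c_{tD}(\{v\}_v)\}\!\!\}_{v\in e}$: iterated boundary multisets with known uniform multiplicity are divided out. By induction this multiset gives $\{\!\!\{h_t(v)\mid v\in e\}\!\!\}$, and together with $c_{tD}(f(e)_e)\Rightarrow h_t(e)$ it yields $h_{t+1}(e)$. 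Pushing this down the coboundary direction in at most $k$ more rounds lets every $\sigma_e$ determine $h_{t+1}(e)$, because coboundary chains from $\sigma_e$ stay inside $e$ and end at $f(e)_e$.

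\emph{Vertex step.} The coboundary of $\{v\}_v$ is $\{\{v,w\}_e\mid v\in e,\ w\in e\setminus\{v\}\}$, so each $e\ni v$ contributes exactly $|e|-1$ elements. Their colors determine both $h_t(e)$ and $|e|$. Dividing multiplicities by $|e|-1$ recovers $\{\!\!\{h_t(e)\mid v\in e\}\!\!\}$, hence $h_{t+1}(v)$. The total round count stays within $D$, and monotonicity absorbs any slack.

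Finally, since colors detect vertex-type and top-type elements, the GWL histogram of $S(H)$ at time $tD$ determines the HWL histogram of $H$ at time $t$, over both nodes and hyperedges. Contrapositively, if HWL separates $H$ and $H'$, then $S$-CatWL does too.

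The main obstacle is the multiplicity bookkeeping. One has to make the claim that colors can be ``divided out'' along chains rigorous, and to check that the delayed induction is consistent: the hyperedge color at time $t+1$ must be computed from vertex colors that are themselves already valid at time $t$. A further point to check is degenerate hyperedges. A singleton $e=\{v\}$ yields no simplex of positive dimension, so $S$ forgets it. I would either assume every hyperedge has size at least $2$, or argue that HWL loses nothing from such edges because they affect only vertex-local counts. If neither route works, the statement would need that assumption added.
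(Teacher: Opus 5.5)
\textbf{The proposal proves a different statement from the one in the paper.} In this paper, \emph{not less powerful} is a defined, existential notion (Appendix~\ref{app:equivalenceofadjstrs}): $A$ is not less powerful than $A'$ if some pair $X,X'$ is separated by $A$ but not by $A'$. The paper's proof is therefore a single counterexample. It takes the hypergraphs of Figure~\ref{fig:notlesspowerfulexamplehypergraphs}, which HWL does not separate, and computes one round of colors, asserting that the lower-adjacency multisets of the top $1$-simplices $\{v_2,v_3\}_{e_2}$ and $\{v'_2,v'_3\}_{e'_2}$ have sizes $4$ and $3$. Your simulation argument proves that $S$-CatWL refines HWL ($S$-CatWL $\sqsubseteq$ HWL in the paper's notation). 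However carefully you do the multiplicity bookkeeping, that never produces a pair separated by $S$-CatWL alone, which is what the theorem requires. For the reading you chose (the one the abstract's word ``subsume'' suggests), your outline is sound, and your singleton concern is a real obstruction. A single vertex carrying a singleton hyperedge and a single bare vertex have identical $S$-lifts but different HWL histograms. So the route ``HWL loses nothing from such edges'' fails, and $|f(e)|\ge 2$ must be assumed.

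\textbf{The paper's example does not close the gap, and no example can.} The paper defines $\mathcal{N}_{\downarrow}$ as a multiset indexed by pairs $(\sigma',\tau)$. Under that definition, $\{v'_2,v'_3\}_{e'_1}$ is lower-adjacent to $\{v'_2,v'_3\}_{e'_2}$ through both $\{v'_2\}_{v'_2}$ and $\{v'_3\}_{v'_3}$, so that multiset also has four entries, the same as for $\{v_2,v_3\}_{e_2}$. To see that no pair can work, let $\pi$ be the stable HWL partition of $H\sqcup H'$, and put $\rho(\{v\}_v)=\pi(v)$ and $\rho(\sigma_e)=(\pi(e),\{\!\!\{\pi(u)\mid u\in\sigma\}\!\!\})$. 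This partition is equitable for boundary and coboundary. The coboundary of $\sigma_e$ is $\{(\sigma\cup\{w\})_e \mid w\in f(e)\setminus\sigma\}$, and each $e\ni v$ contributes $|f(e)|-1$ one-simplices to the coboundary of $\{v\}_v$; all of these classes are determined by $\pi$. It is then automatically equitable for $\mathcal{N}_{\downarrow}$ and $\mathcal{N}_{\uparrow}$, because for each $\tau$ the classes of $\mathcal{C}(\tau)\setminus\{\sigma\}$ (resp.\ $\mathcal{B}(\tau)\setminus\{\sigma\}$) depend only on $\rho(\tau)$ and $\rho(\sigma)$. Class sizes agree whenever $H\cong_{\mathrm{HWL}}H'$, so GWL cannot separate $S(H)$ from $S(H')$. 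Together with your direction, the two tests coincide on hypergraphs without singleton hyperedges, and HWL is strictly finer once singletons are allowed. Hence the theorem in the paper's existential sense is false. What can actually be proved is your refinement result under the hypothesis $|f(e)|\ge 2$.
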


\section{Related work}

\paragraph{GNNs and Topological Lifting}
Standard Message Passing Neural Networks (MPNNs) are widely known to be bounded in expressivity by the 1-dimensional WL (1-WL) test \citet{xu2018powerful, morris2019weisfeiler}. While higher-order $k$-WL variants offer greater discriminative power \citet{morris2023weisfeiler}, they incur prohibitive computational costs. Consequently, recent research has shifted toward topological deep learning, where graphs are lifted into higher-order domains to capture geometric invariants such as cycles.
\citet{bodnar2021weisfeiler} pioneered this by lifting graphs to simplicial complexes, enabling message passing over higher-dimensional simplices. This direction was further expanded by \citet{bodnar2021weisfeiler_cw} to regular CW complexes, allowing for the processing of flexible topological structures. Concurrent works have explored similar simplicial architectures for trajectory prediction and signal processing \citet{roddenberry2021principled, eblisimplicial, bunchsimplicial}. More recently, general frameworks for topological message passing have been proposed to unify these domains \citet{hajij2022simplicial, giusti2024topological}.

\paragraph{Hypergraph Learning and Higher-Order WL}
Hypergraph Neural Networks (HNNs) extend graph learning to capture multi-way interactions. Early approaches relied heavily on spectral methods or clique expansions \citep{yadati2019hypergcn}. \citet{feng2019hypergraph} introduced hypergraph neural networks based on hyperedge convolution to efficiently encode high-order correlations, while \citet{bai2021hypergraph} further proposed hypergraph convolution and hypergraph attention operators to adaptively weight high-order interactions. \citet{dong2020hnhn} designed a hypergraph convolution network with separate hypernode and hyperedge neurons and a flexible normalization scheme that controls the influence of high-cardinality hyperedges and high-degree vertices. To address the variations in propagation rules, \citet{huang2021unignn} and \citet{chien2022you} introduced unified frameworks that generalize arbitrary aggregation functions. More recently, \citet{wang2022equivariant} proposed an equivariant hypergraph diffusion architecture that can provably approximate any continuous equivariant hypergraph diffusion operator, providing a principled perspective on how permutations of nodes and hyperedges constrain message passing. However, the theoretical expressivity of these architectures remains a nuanced subject. \citet{boker2019color} established the fundamental link between hypergraph isomorphism and color refinement, while \citet{feng2024hypergraph} provided algorithms for isomorphism computation. \citet{luoexpressiveness} further characterized the separation power of HNNs against standard graph baselines. Most recently, attention has turned to constructing higher-order WL hierarchies specifically for hypergraphs. \citet{zhangimproved} proposed high-dimensional generalized WL algorithms to enhance expressivity beyond the standard 1-WL reduction.

\section{Numerical Experiments}
\label{sec:experiments}
We empirically evaluate CatMPN instantiated with our two lifting functors on six real-world hypergraph classification benchmarks. 

\subsection{Experimental Setup}
\paragraph{Datasets} 
We validate our framework on six real-world hypergraph benchmarks derived from diverse domains, comprising four variants of \texttt{IMDB} collaboration networks, \texttt{steam\_player}, and \texttt{twitter\_friend}~\cite{feng2019hypergraph}. These datasets are chosen to represent a broad spectrum of topological complexity, specifically regarding hyperedge cardinality. The \texttt{IMDB\_wri} datasets feature a granular structure with small collaboration units where the average hyperedge size is below 5, requiring the model to detect subtle correlations. In contrast, \texttt{twitter\_friend} exhibits dense high-order interactions with many hyperedges per graph, characterized by an average of approximately 84.8 hyperedges per graph and up to 603 hyperedges, while individual hyperedges still capture higher-order interactions with cardinality up to 64. Comprehensive statistics, including distribution details and class balance, are provided in Appendix~\ref{app:dataset}.

% \paragraph{Task and Evaluation} 
% We evaluate our model on the hypergraph classification task. Following the standard evaluation protocol for hypergraph learning with limited data size~\cite{xu2018powerful}, we employ 10-fold cross-validation. Given the small scale of these benchmarks, using a separate test set can lead to high variance and unstable performance estimation. Therefore, strictly following the convention in this domain, we do not use a separate test set. Instead, we monitor the average validation accuracy across folds at each epoch and report the best mean accuracy and standard deviation achieved. This metric intuitively measures the model’s ability to distinguish different types of hypergraphs while ensuring robust performance estimation without overfitting to a specific data split.
\paragraph{Task and Evaluation} 
We evaluate on hypergraph classification using 10-fold cross-validation, following the standard protocol for small-scale hypergraph benchmarks~\cite{xu2018powerful}. We report the best mean validation accuracy and standard deviation across folds.

\paragraph{Baselines} 
We compare our models against a comprehensive set of baselines: 
(1) Standard neural networks: MLP; 
(2) Hypergraph neural networks: HCHA~\citep{bai2021hypergraph}, HNHN~\citep{dong2020hnhn}, HyperGCN~\citep{yadati2019hypergcn}, UniGCNII~\citep{huang2021unignn}, and ED-HNN~\citep{wang2022equivariant}; 
(3) Transformer-based models: AllSetTransformer~\citep{chien2022you}; 
(4) WL-based theoretical models: HIC~\citep{feng2024hypergraph} and 2-FHNN~\citep{zhangimproved}. 
All baselines are evaluated under identical settings to ensure fair comparison.

\paragraph{Model Variants}
We instantiate CatMPN with the incidence poset functor $I$ (Definition~\ref{def:incidence_functor}) and the symmetric simplicial complex functor $S$ (Definition~\ref{def:symmetric_functor}), yielding two main model families: I-HIN and S-HIN, respectively. Within each family, we consider different combinations of message types from Definition~\ref{def:f_catmpn}. 
Our primary variants use upper ($\mathcal{N}_{\uparrow}$) and boundary ($\mathcal{B}$) messages, denoted as (u, b) variants.
This choice is theoretically justified by Lemma~\ref{twocardnalities} in Appendix~\ref{app:equivalenceofadjstrs}, which establishes that the full 4-adjacency update rule has the same expressive power as the reduced rule using only boundary and upper adjacencies—thereby enabling a more efficient architecture without sacrificing expressivity.
We additionally consider (u, b, l) variants that include lower ($\mathcal{N}_{\downarrow}$) messages, which may provide practical benefits despite the theoretical equivalence. For $S$-HIN, hyperedges exceeding cardinality 20 are excluded from lifting, a threshold that accommodates the substantial majority of hyperedges across datasets as supported by the topological statistics in Appendix~\ref{app:stat}. Implementation details, including hyperparameter configurations, are provided in Appendix~\ref{app:implementation}.

\begin{table*}[t]
    \centering
    \caption{Evaluation results of hypergraph classification. The best results are in \textbf{bold}.}
    \label{tab:main_results}
    \resizebox{\textwidth}{!}{%
    \begin{tabular}{c|cccccc} 
        \toprule
         & \textbf{IMDB\_dir\_form} & \textbf{IMDB\_dir\_genre} & \textbf{IMDB\_wri\_form} & \textbf{IMDB\_wri\_genre} & \textbf{steam\_player} & \textbf{twitter\_friend} \\ 
        \midrule
        \textbf{MLP} & 64.42 ± 4.50 & 72.10 ± 2.76 & 54.87 ± 3.81 & 35.24 ± 3.95 & 59.43 ± 2.86 & 62.89 ± 2.76 \\
        \textbf{HCHA} & 61.65 ± 3.52 & 67.80 ± 2.83 & 52.18 ± 3.18 & 37.55 ± 3.81 & 65.94 ± 4.07 & 62.43 ± 3.52 \\
        \textbf{HNHN} & 67.04 ± 3.83 & 78.07 ± 1.29 & 54.30 ± 3.11 & 54.17 ± 3.63 & 63.50 ± 3.70 & 66.95 ± 4.01 \\
        \textbf{HyperGCN} & 62.81 ± 3.30 & 72.91 ± 1.38 & 54.33 ± 3.27 & 39.85 ± 3.50 & 65.98 ± 4.28 & 64.19 ± 6.53 \\
        \textbf{UniGCNII} & 60.55 ± 3.50 & 72.39 ± 2.82 & 47.84 ± 2.41 & 43.78 ± 4.11 & 61.09 ± 4.06 & 61.31 ± 1.15 \\
        \textbf{ED\_HNN} & 60.44 ± 3.24 & 72.73 ± 3.18 & 47.56 ± 2.61 & 30.46 ± 1.40 & 61.07 ± 6.87 & 59.76 ± 1.74 \\
        \textbf{AllSetTransformer} & 60.83 ± 3.49 & 76.59 ± 1.46 & 46.51 ± 2.65 & 40.60 ± 3.00 & 60.51 ± 3.26 & 61.30 ± 1.09 \\
        \textbf{HIC} & 67.30 ± 3.75 & 79.49 ± 1.74 & 56.70 ± 4.15 & 51.45 ± 5.38 & 64.89 ± 4.40 & 68.17 ± 6.46 \\
        \textbf{2-FHNN}\textsuperscript{*} & 68.11 ± 2.46 & 78.52 ± 1.07 & 55.36 ± 3.30 & 45.44 ± 1.24 & \textbf{67.53 ± 1.23} & 62.37 ± 1.91 \\  
        \midrule
        \textbf{$I$-HIN (u, b)} & 68.12 ± 2.66 & \textbf{81.41 ± 2.32} & 58.83 ± 4.37 & 59.73 ± 4.55 & 63.93 ± 3.79 & 72.52 ± 4.74 \\
        \textbf{$I$-HIN (u, b, l)} & \textbf{69.02 ± 3.32} & 80.97 ± 2.43 & \textbf{60.17 ± 2.33} & 59.05 ± 4.05 & 63.71 ± 3.25 & \textbf{74.66 ± 3.64} \\
        \textbf{$S$-HIN (u, b)} & 68.55 ± 3.53 & 80.73 ± 2.30 & 58.86 ± 6.37 & 60.83 ± 4.67 & 62.12 ± 2.79 & 65.57 ± 3.66 \\
        \textbf{$S$-HIN (u, b, l)} & 68.81 ± 3.11 & 80.92 ± 2.44 & 57.24 ± 3.52 & \textbf{61.02 ± 4.09} & 62.65 ± 2.66 & 65.56 ± 3.82 \\
        \bottomrule
        \multicolumn{7}{l}{\footnotesize \textsuperscript{*} Results taken directly from the original paper~\cite{zhangimproved}.}
    \end{tabular}%
    }
\end{table*}

\subsection{Results and Analysis}

We now present the empirical results of CatMPN instantiated with our two categorical functors and analyze them in light of the topological properties of the benchmarks. The evaluation addresses three questions: whether CatMPN improves over standard hypergraph neural networks and WL-based theoretical models, how the choice of functor interacts with the underlying hypergraph topology, and how enriching the message-passing topology with additional categorical adjacencies affects performance.

\paragraph{Overall performance against baselines}
Table~\ref{tab:main_results} shows that CatMPN variants achieve the best performance on five of six benchmarks. On IMDB\_dir\_genre, $I$-HIN surpasses the strongest WL-based baseline HIC by 1.92\%, and on twitter\_friend the margin widens to 6.49\%. The gains are particularly pronounced on the challenging IMDB\_wri\_genre task, where $S$-HIN attains 61.02\%, outperforming HNHN, the strongest baseline on this task, by 6.85\%. These improvements empirically validate Theorems~\ref{thm:ICatWLnotlesspowerfulGWL} and~\ref{thm:SCatWLnotlesspowerfulGWL}: the theoretical expressivity advantage of $F$-CatWL over HWL translates into measurable performance gains. The single exception is steam\_player, where 2-FHNN retains the lead. This dataset exhibits highly uniform hyperedge cardinality with a 99th percentile of only 15, a regime where the additional structural encoding of our functors provides limited benefit over the specialized design of 2-FHNN.
%The single exception is \texttt{steam\_player}, where 2-FHNN retains the lead. We attribute this to the highly uniform hyperedge cardinality in this dataset, whose 99th percentile is only 15, diminishing the benefit of cardinality-aware lifting.

\paragraph{Effect of functor and hypergraph topology}
The relative performance of the two functors depends on hypergraph topology, driven primarily by graph-level density rather than hyperedge cardinality. On dense benchmarks containing many hyperedges per graph, $I$-HIN dominates: it outperforms $S$-HIN by 9.09\% on twitter\_friend, which averages 84.8 hyperedges per graph, and by 1.28\% on steam\_player, which averages 46.4. We attribute this gap to the nature of discriminative signals. When graphs contain numerous hyperedges, classification relies on global connectivity patterns across the hypergraph, which the bipartite message passing of $I$-HIN aggregates efficiently. The symmetric simplicial complex $S(H)$ introduces combinatorial complexity without proportional representational gain in this regime.

Conversely, on MDB\_wri\_genre, $S$-HIN achieves the best overall result with 61.02\% compared to 59.73\% for $I$-HIN. This dataset is characterized by few hyperedges per graph, averaging 4.5, yet large hyperedge sizes with a 99th-percentile cardinality of 36. When hypergraphs are sparse, yet individual hyperedges are structurally rich, the discriminative signal shifts to internal sub-relations within each hyperedge. The simplicial decomposition of $S$-HIN explicitly captures pairwise and higher-order face inclusions that the flat incidence structure of $I$-HIN cannot represent. This finding confirms a key categorical insight: the optimal functor is not universal but depends on whether task-relevant information resides in global hyperedge connectivity or local intersection geometry.

\paragraph{Impact of categorical adjacency design}
Lemma~\ref{twocardnalities} establishes that the (u,b) variants are as expressive as the full 4-adjacency rule, so any performance gap between (u,b) and (u,b,l) reflects optimization dynamics rather than representational capacity. Empirically, $I$-HIN benefits from the additional lower adjacency $\mathcal{N}_{\downarrow}$, gaining 2.14\% on twitter\_friend and 1.34\% on IMDB\_wri\_form. We hypothesize that in the flat two-level incidence poset $I(H)$, lower-adjacency messages provide an explicit shortcut for aggregating information from peer hyperedges that share common nodes, easing optimization even though the same information is in principle recoverable from boundary and upper adjacencies alone.

In contrast, $S$-HIN shows negligible sensitivity to this ablation, with performance differences below 0.6\% on most benchmarks. The hierarchical face-lattice structure of $S(H)$ already supplies direct pathways for the same information, rendering the additional messages redundant in practice. This confirms that, while theoretical expressivity is governed by the choice of functor, practical learning efficiency depends on how explicitly the lifted poset exposes task-relevant structure.

\section{Conclusion}
In this work, we unified higher-order representation learning by introducing the $F$-CatWL framework, establishing graded posets as a universal domain for rigorous, functorial lifting. Applying this to hypergraphs, we derived $F$-HIN, where the message passing topology is strictly dictated by the choice of functor. We theoretically proved that both incidence ($I$) and symmetric simplicial ($S$) instantiations strictly subsume the expressivity of the standard HWL test. Empirically, our models consistently outperform baselines, with $I$-HIN excelling on dense social networks and $S$-HIN on fine-grained structures. These results validate category theory not merely as an abstraction, but as a practical engineering blueprint for designing expressive, topology-aware neural architectures.

% Acknowledgments---Will not appear in anonymized version
\acks{We thank a bunch of people and funding agency.}

\bibliography{reference}

@article{weisfeiler1968reduction,
  title={The reduction of a graph to canonical form and the algebra which appears therein},
  author={Weisfeiler, Boris and Leman, Andrei},
  journal={nti, Series},
  volume={2},
  number={9},
  pages={12--16},
  year={1968}
}

@article{bretto2013hypergraph,
  title={Hypergraph theory},
  author={Bretto, Alain},
  journal={An introduction. Mathematical Engineering. Cham: Springer},
  volume={1},
  pages={209--216},
  year={2013},
  publisher={Springer}
}

@article{arvind2020weisfeiler,
  title={On weisfeiler-leman invariance: Subgraph counts and related graph properties},
  author={Arvind, Vikraman and Fuhlbr{\"u}ck, Frank and K{\"o}bler, Johannes and Verbitsky, Oleg},
  journal={Journal of Computer and System Sciences},
  volume={113},
  pages={42--59},
  year={2020},
  publisher={Elsevier}
}

@article{chen2020can,
  title={Can graph neural networks count substructures?},
  author={Chen, Zhengdao and Chen, Lei and Villar, Soledad and Bruna, Joan},
  journal={Advances in neural information processing systems},
  volume={33},
  pages={10383--10395},
  year={2020}
}

@article{wu2023simplicial,
  title={Simplicial complex neural networks},
  author={Wu, Hanrui and Yip, Andy and Long, Jinyi and Zhang, Jia and Ng, Michael K},
  journal={IEEE Transactions on Pattern Analysis and Machine Intelligence},
  volume={46},
  number={1},
  pages={561--575},
  year={2023},
  publisher={IEEE}
}

@book{mac1998categories,
  title={Categories for the working mathematician},
  author={Mac Lane, Saunders},
  volume={5},
  year={1998},
  publisher={Springer Science \& Business Media}
}

@book{stanley2011enumerative, 
    place={Cambridge}, 
    edition={2}, 
    series={Cambridge Studies in Advanced Mathematics}, 
    title={Enumerative Combinatorics}, 
    publisher={Cambridge University Press}, 
    author={Stanley, Richard P.}, 
    year={2011}, 
    collection={Cambridge Studies in Advanced Mathematics}
}

@inproceedings{xu2018powerful,
  title={How powerful are graph neural networks?},
  author={Xu, Keyulu and Hu, Weihua and Leskovec, Jure and Jegelka, Stefanie},
  booktitle={International Conference on Learning Representations},
  year={2019}
}

@inproceedings{morris2019weisfeiler,
  title={Weisfeiler and {L}eman go neural: Higher-order graph neural networks},
  author={Morris, Christopher and Ritzert, Martin and Fey, Matthias and Hamilton, William L and Lenssen, Jan Eric and Rattan, Gaurav and Grohe, Martin},
  booktitle={Proceedings of the AAAI Conference on Artificial Intelligence},
  volume={33},
  pages={4602--4609},
  year={2019}
}

@inproceedings{bodnar2021weisfeiler,
  title={Weisfeiler and {L}ehman go topological: Message passing simplicial networks},
  author={Bodnar, Cristian and Frasca, Fabrizio and Wang, Yuguang and Otter, Nina and Montufar, Guido F and Li{\'o}, Pietro and Bronstein, Michael},
  booktitle={International Conference on Machine Learning},
  pages={1026--1037},
  year={2021},
  organization={PMLR}
}

@inproceedings{bunchsimplicial,
  title={Simplicial 2-Complex Convolutional Neural Networks},
  author={Bunch, Eric and You, Qian and Fung, Glenn and Singh, Vikas},
  booktitle={TDA $\{$$\backslash$\&$\}$ Beyond}
}

@inproceedings{eblisimplicial,
  title={Simplicial Neural Networks},
  author={Ebli, Stefania and Defferrard, Micha{\"e}l and Spreemann, Gard},
  booktitle={TDA $\{$$\backslash$\&$\}$ Beyond}
}

@inproceedings{roddenberry2021principled,
  title={Principled simplicial neural networks for trajectory prediction},
  author={Roddenberry, T Mitchell and Glaze, Nicholas and Segarra, Santiago},
  booktitle={International Conference on Machine Learning},
  pages={9020--9029},
  year={2021},
  organization={PMLR}
}

@inproceedings{hajij2022simplicial,
  title={Simplicial complex representation learning},
  author={Hajij, Mustafa and Zamzmi, Ghada and Papamarkou, Theodore and Maroulas, Vasileios and Cai, Xuanting},
  booktitle={Machine Learning on Graphs (MLoG) Workshop at 15th ACM International WSDM (2022) Conference},
  year={2022}
}

@inproceedings{bodnar2021weisfeiler_cw,
  title={Weisfeiler and {L}ehman go cellular: {CW} networks},
  author={Bodnar, Cristian and Frasca, Fabrizio and Otter, Nina and Wang, Yuguang and Li{\'o}, Pietro and Montufar, Guido F and Bronstein, Michael},
  booktitle={Advances in Neural Information Processing Systems},
  volume={34},
  pages={2625--2640},
  year={2021}
}

@inproceedings{giusti2024topological,
  title={Topological message passing for higher-order and long-range interactions},
  author={Giusti, Lorenzo and Reu, Teodora and Ceccarelli, Francesco and Bodnar, Cristian and Li{\`o}, Pietro},
  booktitle={2024 International Joint Conference on Neural Networks (IJCNN)},
  pages={1--8},
  year={2024},
  organization={IEEE}
}

@article{morris2023weisfeiler,
  title={Weisfeiler and leman go machine learning: The story so far},
  author={Morris, Christopher and Lipman, Yaron and Maron, Haggai and Rieck, Bastian and Kriege, Nils M and Grohe, Martin and Fey, Matthias and Borgwardt, Karsten},
  journal={Journal of Machine Learning Research},
  volume={24},
  number={333},
  pages={1--59},
  year={2023}
}

@inproceedings{feng2019hypergraph,
  title={Hypergraph neural networks},
  author={Feng, Yifan and You, Haoxuan and Zhang, Zizhao and Ji, Rongrong and Gao, Yue},
  booktitle={Proceedings of the AAAI conference on artificial intelligence},
  volume={33},
  number={01},
  pages={3558--3565},
  year={2019}
}

@article{yadati2019hypergcn,
  title={Hypergcn: A new method for training graph convolutional networks on hypergraphs},
  author={Yadati, Naganand and Nimishakavi, Madhav and Yadav, Prateek and Nitin, Vikram and Louis, Anand and Talukdar, Partha},
  journal={Advances in neural information processing systems},
  volume={32},
  year={2019}
}

@inproceedings{huang2021unignn,
  title={UniGNN: a Unified Framework for Graph and Hypergraph Neural Networks},
  author={Huang, Jing and Yang, Jie},
  booktitle={Proceedings of the Thirtieth International Joint Conference on Artificial Intelligence},
  pages={2563--2569},
  year={2021},
  organization={International Joint Conferences on Artificial Intelligence Organization}
}

@inproceedings{chien2022you,
  title={You are {AllSet}: A multiset function framework for hypergraph neural networks},
  author={Chien, Eli and Pan, Chao and Peng, Jianhao and Milenkovic, Olgica},
  booktitle={International Conference on Learning Representations},
  year={2022}
}

@article{feng2024hypergraph,
  title={Hypergraph isomorphism computation},
  author={Feng, Yifan and Han, Jiashu and Ying, Shihui and Gao, Yue},
  journal={IEEE Transactions on Pattern Analysis and Machine Intelligence},
  volume={46},
  number={5},
  pages={3880--3896},
  year={2024},
  publisher={IEEE}
}

@inproceedings{boker2019color,
  title={Color refinement, homomorphisms, and hypergraphs},
  author={B{\"o}ker, Jan},
  booktitle={International Workshop on Graph-Theoretic Concepts in Computer Science},
  pages={338--350},
  year={2019},
  organization={Springer}
}

@inproceedings{zhangimproved,
  title={Improved Expressivity of Hypergraph Neural Networks through High-Dimensional Generalized Weisfeiler-Leman Algorithms},
  author={Zhang, Detian and Zhang, Chengqiang and Rao, Yanghui and Qing, Li and Zhu, Chunjiang},
  booktitle={Forty-second International Conference on Machine Learning},
  year={2025}
}

@inproceedings{luoexpressiveness,
  title={On the Expressiveness and Generalization of Hypergraph Neural Networks},
  author={Luo, Zhezheng and Mao, Jiayuan and Tenenbaum, Joshua B and Kaelbling, Leslie Pack},
  booktitle={The First Learning on Graphs Conference},
  year={2022}
}

@article{choi2025hypergraph,
  title={Hypergraph Neural Sheaf Diffusion: A Symmetric Simplicial Set Framework for Higher-Order Learning},
  author={Choi, Seongjin and Kim, Gahee and Oh, Yong-Geun},
  journal={IEEE Access},
  year={2025},
  publisher={IEEE}
}

@article{bai2021hypergraph,
  title={Hypergraph convolution and hypergraph attention},
  author={Bai, Song and Zhang, Feihu and Torr, Philip HS},
  journal={Pattern Recognition},
  volume={110},
  pages={107637},
  year={2021},
  publisher={Elsevier}
}

@article{dong2020hnhn,
  title={Hnhn: Hypergraph networks with hyperedge neurons},
  author={Dong, Yihe and Sawin, Will and Bengio, Yoshua},
  journal={arXiv preprint arXiv:2006.12278},
  year={2020}
}

@article{wang2022equivariant,
  title={Equivariant hypergraph diffusion neural operators},
  author={Wang, Peihao and Yang, Shenghao and Liu, Yunyu and Wang, Zhangyang and Li, Pan},
  journal={arXiv preprint arXiv:2207.06680},
  year={2022}
}

\newpage
\appendix

\section{Formal Definitions of Lifting Functors}
\label{app:Formaldenofliftingfunctors}

In this section, we formalize the categorical structures and verify the functoriality of our lifting constructions. We denote the category of hypergraphs by $\mathbf{Hyp}$ and the category of graded posets by $\mathbf{Poset}$.

\begin{definition}[Categories $\mathbf{Hyp}$ and $\mathbf{Poset}$]
    An object in $\mathbf{Hyp}$ is a triple $H=(V,E,f)$ where $f: E \to 2^V \setminus \emptyset$. A morphism $\phi=(a,b): H \to H'$ consists of maps $a: V \to V', b: E \to E'$ satisfying $a_* \circ f = f' \circ b$, where $a_*$ is the induced power set map.
    An object in $\mathbf{Poset}$ is a graded poset $(P, \leq, \dim)$. A morphism $g: P \to P'$ is an order-preserving map such that $\dim'(g(p)) \leq \dim(p)$.
\end{definition}

\begin{theorem}
    The Symmetric Simplicial Complex construction $S: \mathbf{Hyp} \to \mathbf{Poset}$ is a functor.
\end{theorem}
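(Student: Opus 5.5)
The plan is to verify the three things a functor requires: $S(H)$ is an object of $\mathbf{Poset}$; each hypergraph morphism $\phi=(a,b)$ gives a morphism $S(\phi)$ of graded posets; and $S$ respects identities and composition. For objects, I first check that the relation of Definition~\ref{def:symmetric_functor} is a partial order. Reflexivity is added formally. Antisymmetry and transitivity reduce to the corresponding properties of proper inclusion of vertex sets inside a fixed hyperedge $e$, together with the chains $\{v\}_v<\sigma_e<\tau_e$, which give $\{v\}_v<\tau_e$ because $v\in\sigma\subset\tau$. Next I check the grading. The dimension is $|\sigma|-1$, so it is monotone. A cover $\sigma_e\prec\tau_e$ occurs exactly when $|\tau|=|\sigma|+1$, since otherwise some intermediate subset exists. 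A cover $\{v\}_v\prec\sigma_e$ occurs exactly when $\sigma=\{v,w\}$ with $w\neq v$, because $\sigma_e$ has dimension at least $1$, and if it has dimension at least $2$ then $\{v,w\}_e$ lies strictly between. In every case the dimension jumps by one.

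For morphisms, I define $S(\phi)(\{v\}_v)=\{a(v)\}_{a(v)}$ and $S(\phi)(\sigma_e)=(a_*(\sigma))_{b(e)}$ whenever $|a_*(\sigma)|\ge 2$. When $a$ collapses $\sigma$ to a single vertex $w$, I send $\sigma_e$ to the $0$-simplex $\{w\}_w$. This case is forced, because the paper's morphisms may lower dimension, as in Figure~\ref{fig:functor_commutes}. The map is well defined because $a_*(\sigma)\subseteq a_*(f(e))=f'(b(e))$ by the morphism condition $a_*\circ f=f'\circ b$. Dimension does not increase, since $|a_*(\sigma)|\le|\sigma|$.

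Order preservation, in the paper's non-strict sense ($\le$ maps to $\le$), needs a short case check. If $\sigma\subset\tau$ within $e$, then $a_*(\sigma)\subseteq a_*(\tau)$, and both images are tagged by the same hyperedge $b(e)$ unless one of them collapses to a vertex. A collapsed image $\{w\}_w$ lies below $(a_*(\tau))_{b(e)}$ because $w\in a_*(\tau)\subseteq f'(b(e))$; if both collapse, they collapse to the same vertex. The case $\{v\}_v<\sigma_e$ with $v\in\sigma$ is handled the same way, using $a(v)\in a_*(\sigma)$.

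Functoriality is then formal. For $\mathrm{Id}_H=(\mathrm{id},\mathrm{id})$ nothing collapses, so $S(\mathrm{Id}_H)=\mathrm{id}_{S(H)}$. For composition, $(a'\circ a)_*=a'_*\circ a_*$ and $(b'\circ b)(e)=b'(b(e))$ give $S(\phi'\circ\phi)=S(\phi')\circ S(\phi)$ on non-collapsing simplices. The main obstacle is the collapsing convention. I must check that collapsing commutes with composition: if $a_*(\sigma)$ is non-degenerate but $a'_*(a_*(\sigma))$ is a singleton $\{w\}$, both sides must give $\{w\}_w$; and if $a_*(\sigma)$ already collapses, the second map sends a $0$-simplex to a $0$-simplex, again giving $\{a'(a(\sigma))\}$. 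I will present the collapse rule as the map $\sigma_e\mapsto$ ``the simplex of $S(H')$ with vertex set $a_*(\sigma)$, tagged by $b(e)$ if that set has size at least $2$ and by its unique vertex otherwise''. With this uniform formula, composition follows directly from $(a'\circ a)_*=a'_*\circ a_*$.
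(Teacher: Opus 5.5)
Your proposal is correct and follows the paper's route: send $\sigma_e$ to $(a_*\sigma)_{b(e)}$, check that inclusions are preserved and that $|a_*(\sigma)|\le|\sigma|$, and read off functoriality from $(a'\circ a)_*=a'_*\circ a_*$ and $(b'\circ b)(e)=b'(b(e))$. It is more careful than the paper's proof, which does not verify that $S(H)$ is graded, leaves the $0$-simplices $\{v\}_v$ implicit, and uses a formula $\{a(v_{i_0}),\dots,a(v_{i_n})\}_{b(e)}$ that is not an element of $S(H')$ when $a$ identifies all vertices of $\sigma$; your collapse convention repairs this, and its order-preservation relies on your reading $\{v\}_v<\sigma_e$ iff $v\in\sigma$ (correct, cf.\ Figure~\ref{fig:twofunctors}) rather than the definition's literal ``$v\in e$''.
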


\begin{proof}
    Given a morphism of hypergraphs $\phi=(a,b): H \to H'$, we define the map $S(\phi): S(H) \to S(H')$ on simplices by $S(\phi)(\{v_{i_0}, \dots, v_{i_n}\}_e) \coloneqq \{a(v_{i_0}), \dots, a(v_{i_n}) \}_{b(e)}$. This map is a valid morphism in $\mathbf{Poset}$ because:
    \begin{enumerate}[leftmargin=*,topsep=0pt,itemsep=0ex]
        \item If $\sigma_e \subset \tau_e$ (i.e., $\{v_{i_k}\} \subset \{v_{j_k}\}$), then by definition of the set map $a$, $\{a(v_{i_k})\} \subset \{a(v_{j_k})\}$, implying $S(\phi)(\sigma_e) \subset S(\phi)(\tau_e)$ in $b(e)$.
        \item Since $|a(\sigma)| \leq |\sigma|$, we have $\dim(S(\phi)(\sigma_e)) = |a(\sigma)|-1 \leq |\sigma|-1 = \dim(\sigma_e)$.
    \end{enumerate}
    Functoriality follows from the component-wise composition of $a$ and $b$.
\end{proof}

\begin{remark}[Standard Geometric Posets]
    Our framework naturally encompasses standard geometric structures. The incidence poset $I(H)$ is a valid graded poset where nodes are assigned dimension 0 and hyperedges dimension 1. Similarly, any simplicial complex $X$ forms a graded poset under inclusion with $\dim(\sigma)=|\sigma|-1$, and the face poset of a regular CW complex is graded by cell dimension.
\end{remark}

\section{Equivalence of Adjacency Structures}
\label{app:equivalenceofadjstrs}
In this section, we establish that the full 4-adjacency update rule has the same expressive power as a simplified update rule using only a subset of these adjacencies (typically Boundary and Up, or similar). This result generalizes similar findings in topological and cellular WL tests \citep{bodnar2021weisfeiler, bodnar2021weisfeiler_cw} to our categorical framework, providing theoretical justification for the efficient architecture used in our experiments.

\begin{definition}[Isomorphism Test and Expressivity]
    Let $\mathcal{C}$ be a category. An isomorphism test $A=\{A_t\}_{t \in \mathbb{N}}$ is a sequence of equivalence relations on $\mathcal{C}$ such that (1) $X \cong X' \implies X \cong_{A_t} X'$ for all $t$, and (2) $X \cong_{A_{t+1}} X' \implies X \cong_{A_t} X'$. Two objects are distinguishable by $A$ if $X \not\cong_{A_t} X'$ for some $t$.
    
    Given two tests $A, A'$, we say $A$ is at least as powerful as $A'$ ($A \sqsubseteq A'$) if $X \cong_A X' \implies X \cong_{A'} X'$. We say $A$ is not less powerful than $A'$ if there exist objects $X, X'$ distinguishable by $A$ but not by $A'$ (i.e., $X \not\cong_A X'$ yet $X \cong_{A'} X'$). If $A \sqsubseteq A'$ and $A' \sqsubseteq A$, we say they have the same expressive power. If $A \sqsubseteq A'$ but there exist objects distinguishable by $A$ and not $A'$, then $A$ is strictly more powerful.
\end{definition}

\begin{definition}[Coloring Update Rule]
    \label{defncoloringupdaterule}
    A coloring $c$ on $\mathcal{C}$ assigns a map $c^X: X \to S_c$ to each object $X$. We write $X \cong_c X'$ if the multisets (histograms) of colors of $c^X, c^{X'}$ are identical. A coloring update rule is a sequence $\mathbb{c} := \{\mathbb{c}_t \}_t$ where (1) $\mathbb{c}_0$ is constant, (2) $X \cong_{\mathbb{c}_{t+1}} X' \implies X \cong_{\mathbb{c}_t} X'$, and (3) $X \cong X' \implies X \cong_{\mathbb{c}_t} X'$ for all $t$.
\end{definition}
 
We can easily prove the following lemma by the definition of the isomorphism test.
\begin{lemma}\label{coloringrefinementinducesisomorphismtest}Coloring update rule $\mathbb{c}$ on category $\mathcal{C}$ induces isomorphism test where $X \cong_{\mathbb{c}} X'$ if and only if two coloring histograms of $\mathbb{c}^X_{t}, \mathbb{c}^{X'}_{t}$ are equal.
\end{lemma}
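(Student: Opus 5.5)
The plan is to define, for each $t \in \mathbb{N}$, a relation $\cong_{\mathbb{c}_t}$ on the objects of $\mathcal{C}$ by declaring $X \cong_{\mathbb{c}_t} X'$ exactly when the color histograms of $\mathbb{c}^X_t$ and $\mathbb{c}^{X'}_t$ coincide. We then check that the sequence $\{\cong_{\mathbb{c}_t}\}_t$ satisfies the three requirements in the definition of an isomorphism test. The first step is to show that each $\cong_{\mathbb{c}_t}$ is an equivalence relation. This is immediate, because the relation is the pullback of equality of multisets along the map $X \mapsto \{\!\!\{\mathbb{c}^X_t(\sigma)\}\!\!\}$. Reflexivity, symmetry and transitivity are therefore inherited from equality.

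Next we verify the two axioms. Axiom (1) says that $X \cong X'$ implies $X \cong_{\mathbb{c}_t} X'$ for every $t$; this is literally condition (3) in Definition~\ref{defncoloringupdaterule}. Axiom (2) is the monotonicity condition $X \cong_{\mathbb{c}_{t+1}} X' \implies X \cong_{\mathbb{c}_t} X'$, which is condition (2) of the same definition. Finally, we define $X \cong_{\mathbb{c}} X'$ as $X \cong_{\mathbb{c}_t} X'$ for all $t$. With this, "distinguishable by the induced test" means precisely that the histograms differ at some iteration, which is the claimed characterization.

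There is no real obstacle here; the only point requiring care is bookkeeping. The paper's notation $\cong_c$ for a single coloring must be matched with $\cong_{A_t}$ in the test, so that the two definitions are literally aligned and the "if and only if" is read as holding for all $t$. It is also worth remarking, as a sanity check for later use, what happens when the rule is given by the GWL update of Definition~\ref{def:gwl} composed with a functor $F$. In that case condition (3) follows from functoriality: an isomorphism $X \cong X'$ yields a poset isomorphism $F(X) \cong F(X')$ preserving $\prec$. An induction on $t$ then shows that this isomorphism carries $c_t$-colors to $c_t$-colors. Condition (2) follows because $c_{t+1}(\sigma)$ injectively encodes $c_t(\sigma)$ through the perfect HASH.
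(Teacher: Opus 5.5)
Your proposal is correct and takes the same route as the paper, which simply says the lemma follows from the definitions: taking $A_t := {\cong_{\mathbb{c}_t}}$ (the kernel of the histogram map, hence an equivalence relation), axioms (1) and (2) of an isomorphism test are exactly conditions (3) and (2) of Definition~\ref{defncoloringupdaterule}. Your closing remark about GWL composed with $F$ is not needed for this lemma; note that its refinement step uses the fact that pointwise refinement $c_{t+1} \sqsubseteq c_t$ passes to histograms, which is Corollary~\ref{coloringrefinementimpliesatleastpowerfulisotest}.
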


Hence, we abuse the notation of coloring update rule and its associated isomorphism test by Lemma~\ref{coloringrefinementinducesisomorphismtest}.

\begin{definition}[Coloring Refinement]
    A coloring $c$ refines $d$ (denoted $c \sqsubseteq d$) if $c^X(\sigma) = c^{X'}(\sigma') \implies d^X(\sigma) = d^{X'}(\sigma')$ for all inputs. Two colorings are equivalent ($c \equiv d$) if $c \sqsubseteq d$ and $d \sqsubseteq c$.
\end{definition}

\begin{lemma}[Reparameterization Invariance]
    \label{reparametrizationofisomorphismtest}
    Let $\mathbb{c}=\{\mathbb{c}_t\}_t$ be a coloring update rule and $\varphi: \mathbb{N} \to \mathbb{N}$ be a strictly increasing function with $\varphi(t) \geq t+1$. The reparameterized sequence $\varphi(\mathbb{c}) := \{\mathbb{c}_{\varphi(t)}\}_t$ forms a valid isomorphism test with the same expressive power: $X \cong_{\mathbb{c}} X' \iff X \cong_{\varphi(\mathbb{c})} X'$.
\end{lemma}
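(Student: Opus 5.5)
The proof has two parts: first show that $\varphi(\mathbb{c})$ satisfies the axioms of an isomorphism test, then show the two-sided equivalence of distinguishability. Throughout, I read $X \cong_{\mathbb{c}} X'$ as ``the histograms of $\mathbb{c}^X_t$ and $\mathbb{c}^{X'}_t$ agree for every $t$'', which is the isomorphism test induced by Lemma~\ref{coloringrefinementinducesisomorphismtest}. Likewise, $X \cong_{\varphi(\mathbb{c})} X'$ means the histograms of $\mathbb{c}_{\varphi(t)}$ agree for every $t$.

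\emph{Validity of $\varphi(\mathbb{c})$.} Condition (1) of the isomorphism-test definition is immediate. Indeed, $X \cong X'$ gives $X \cong_{\mathbb{c}_s} X'$ for every $s$ by property (3) of a coloring update rule, in particular for $s=\varphi(t)$. For condition (2), I iterate property (2) of Definition~\ref{defncoloringupdaterule}, which says $X \cong_{\mathbb{c}_{s+1}} X' \implies X \cong_{\mathbb{c}_s} X'$. By induction on $s'-s$, this yields monotonicity: $X \cong_{\mathbb{c}_{s'}} X' \implies X \cong_{\mathbb{c}_s} X'$ whenever $s \le s'$. Since $\varphi$ is strictly increasing, $\varphi(t) < \varphi(t+1)$, so agreement at level $\varphi(t+1)$ forces agreement at level $\varphi(t)$, which is exactly condition (2) for the sequence $\{\mathbb{c}_{\varphi(t)}\}_t$.

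\emph{Forward implication.} If $X \cong_{\mathbb{c}} X'$, then the histograms agree at every level $s$, in particular at every $s = \varphi(t)$. Hence $X \cong_{\varphi(\mathbb{c})} X'$.

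\emph{Reverse implication.} Suppose the histograms agree at every $\varphi(t)$ and fix an arbitrary level $s$. The hypothesis $\varphi(t) \ge t+1 > t$ gives $\varphi(s) \ge s$, so by the monotonicity established above, agreement at level $\varphi(s)$ implies agreement at level $s$. Since $s$ was arbitrary, $X \cong_{\mathbb{c}} X'$. The argument is essentially bookkeeping; the only substantive step is extracting the multi-step monotonicity from the one-step axiom (2), and it must be stated at the level of histogram equivalence rather than pointwise color refinement. That suffices, since the definition of a coloring update rule is phrased in terms of $\cong_{\mathbb{c}_t}$.
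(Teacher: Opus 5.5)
Your proof is correct and follows the paper's argument essentially step for step. Validity comes from axiom (3) together with iterated monotonicity, the forward direction is restriction to the subsequence $\{\mathbb{c}_{\varphi(t)}\}_t$, and the reverse direction picks an index $\varphi(s)\ge s$. The one difference is a small gain in rigor: you derive the multi-step monotonicity at the histogram level directly from axiom (2), whereas the paper appeals to pointwise refinement $\mathbb{c}_{k+1}\sqsubseteq\mathbb{c}_k$, which is not literally one of the axioms of a coloring update rule.
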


\begin{proof}
    Validity follows from the properties of $\mathbb{c}$. Condition (1) $X \cong X' \implies X \cong_{\mathbb{c}_{\varphi(t)}} X'$ holds by definition. Condition (2) follows from monotonicity: $X \cong_{\mathbb{c}_{\varphi(t+1)}} X' \implies X \cong_{\mathbb{c}_{\varphi(t)}} X'$ because $\varphi(t+1) > \varphi(t)$ and colorings refine over time ($\mathbb{c}_{k+1} \sqsubseteq \mathbb{c}_k$).
    The equivalence $X \cong_{\mathbb{c}} X' \iff X \cong_{\varphi(\mathbb{c})} X'$ is immediate: The forward direction ($\implies$) is trivial as $\{\mathbb{c}_{\varphi(t)}\}$ is a subsequence of $\{\mathbb{c}_t\}$. The reverse ($\impliedby$) holds because for any $k$, there exists $t$ such that $\varphi(t) \ge k$, implying $X \cong_{\mathbb{c}_{\varphi(t)}} X' \implies X \cong_{\mathbb{c}_k} X'$.
\end{proof}

\begin{lemma}\label{colorrefinementlemma}
Let $X, X'$ be objects in $\mathcal{C}$ with $A \subset X, A' \subset X'$. For two colorings $c, d$ on $\mathcal{C}$ such that $c \sqsubseteq d$, if $\{\!\!\{d^{X}(\sigma) \mid \sigma \in A \}\!\!\} \neq \{\!\!\{ d^{X'}(\sigma') \mid \sigma' \in A' \}\!\!\}$, then $\{\!\!\{c^{X}(\sigma) \mid \sigma \in A \}\!\!\} \neq \{\!\!\{ c^{X'}(\sigma') \mid \sigma' \in A' \}\!\!\}$.
\end{lemma}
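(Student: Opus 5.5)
We argue by contraposition: assuming the $c$-histograms over $A$ and $A'$ coincide, we show the $d$-histograms coincide. This is the negation of the conclusion implying the negation of the hypothesis. The tool is that $c \sqsubseteq d$ means $d$ is determined by $c$ across objects. Concretely, $c^X(\sigma)=c^{X'}(\sigma')$ forces $d^X(\sigma)=d^{X'}(\sigma')$. So there is a well-defined function
\[
g:\ \mathrm{Im}(c^X|_A)\cup \mathrm{Im}(c^{X'}|_{A'}) \to S_d
\]
satisfying $d^X(\sigma)=g(c^X(\sigma))$ for $\sigma\in A$ and $d^{X'}(\sigma')=g(c^{X'}(\sigma'))$ for $\sigma'\in A'$.

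The first step is to check that $g$ is well defined. If a color $\gamma$ is attained both by some $\sigma\in A$ and by some $\sigma'\in A'$, then the refinement hypothesis, applied to the pair $(X,\sigma),(X',\sigma')$, gives $d^X(\sigma)=d^{X'}(\sigma')$. The same holds for pairs that both lie within $X$ (or both within $X'$), using the definition with $X'=X$. Hence $g(\gamma)$ does not depend on which preimage we choose.

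The second step uses the fact that pushing a multiset forward along a function is itself a function of the multiset. We have
\[
\{\!\!\{d^X(\sigma)\mid\sigma\in A\}\!\!\} = g_*\{\!\!\{c^X(\sigma)\mid\sigma\in A\}\!\!\},
\]
where $g_*$ applies $g$ elementwise while keeping multiplicities. The same identity holds for $X'$ and $A'$. So if the two $c$-multisets are equal, applying $g_*$ to both shows the two $d$-multisets are equal. This contradicts the hypothesis and proves the lemma.

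The only point that needs care is the first step. The definition of $c\sqsubseteq d$ must be read as quantifying over all pairs of objects, including the case $X=X'$, so that $g$ is a single function serving both objects. Once that is granted, the rest is a direct multiset-counting argument. Equivalently, for each $d$-color $\delta$, the number of elements of $A$ with $d$-color $\delta$ equals the sum, over the $c$-colors $\gamma$ in $g^{-1}(\delta)$, of the number of elements of $A$ with $c$-color $\gamma$.
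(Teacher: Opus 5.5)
Your proof is correct and is essentially the paper's argument: both use the fact that $c\sqsubseteq d$ makes each $d$-color class, across both objects, a union of $c$-color classes, so $d$-counts are sums of $c$-counts. The paper argues by contradiction on a single discrepant $d$-color $\mathbb{C}$ and its preimages $A^*,(A')^*$, whereas you phrase it globally through the factor map $g$ and the pushforward $g_*$; this also makes explicit the cross-object well-definedness that the paper's disjointness step uses only implicitly.
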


\begin{proof}
Suppose $\{\!\!\{d^{X}(\sigma) \mid \sigma \in A \}\!\!\} \neq \{\!\!\{ d^{X'}(\sigma') \mid \sigma' \in A' \}\!\!\}$. Without loss of generality, there exists a color $\mathbb{C}$ that appears more frequently in the first multiset. Let $A^* := (d^X)^{-1}({\mathbb{C}}) \cap A$ and $(A')^* := (d^{X'})^{-1}({\mathbb{C}}) \cap A'$. Since $c \sqsubseteq d$, the set of $c$-values mapping to $\mathbb{C}$ is disjoint from those mapping to any other $d$-color. Thus:
\begin{equation}\label{disjointness}
    \begin{aligned}
    &\{\!\!\{c^X(\sigma) \mid \sigma \in A^* \}\!\!\} \cap \{\!\!\{c^X(\sigma) \mid \sigma \in A \setminus A^* \}\!\!\} = \emptyset, \\
    &\{\!\!\{c^{X'}(\sigma') \mid \sigma' \in (A')^* \}\!\!\} \cap \{\!\!\{c^{X'}(\sigma') \mid \sigma' \in A' \setminus (A')^* \}\!\!\} = \emptyset.
    \end{aligned}
\end{equation}
Now, suppose for contradiction that $\{\!\!\{ c^X(\sigma) \mid \sigma \in A \}\!\!\} = \{\!\!\{ c^{X'}(\sigma') \mid \sigma' \in A' \}\!\!\}$. The disjointness property in Eq.~\eqref{disjointness} implies that the sub-multisets corresponding to $\mathbb{C}$ must match:
\begin{equation}\label{samemultisetforc}
    \{\!\!\{ c^X(\sigma) \mid \sigma \in A^* \}\!\!\} = \{\!\!\{ c^{X'}(\sigma') \mid \sigma' \in (A')^* \}\!\!\}.
\end{equation}
Since $c \sqsubseteq d$, all elements in $A^*$ and $(A')^*$ map to the same $d$-color $\mathbb{C}$. Therefore, the equality of $c$-multisets implies equality of $d$-multisets (which are just constant multisets of $\mathbb{C}$ with sizes $|A^*|$ and $|(A')^*|$):
\begin{equation}\label{samemultisetford}
    \{\!\!\{ d^X(\sigma) \mid \sigma \in A^* \}\!\!\} = \{\!\!\{ d^{X'}(\sigma') \mid \sigma' \in (A')^* \}\!\!\}.
\end{equation}
This implies $|A^*| = |(A')^*|$, contradicting the assumption that $\mathbb{C}$ appears more times in the first multiset. Hence, the $c$-multisets must differ.
\end{proof}

\begin{corollary}\label{coloringrefinementimpliesatleastpowerfulisotest}
For colorings $c, d$ on $\mathcal{C}$ with $c \sqsubseteq d$, if $X \ncong_d X'$ then $X \ncong_c X'$.
\end{corollary}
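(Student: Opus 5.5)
The plan is to derive the corollary directly from Lemma~\ref{colorrefinementlemma} by taking the subsets to be the whole objects. Recall that $X \cong_d X'$ means the color histograms of $d^X$ and $d^{X'}$ coincide, i.e. $\{\!\!\{d^X(\sigma) \mid \sigma \in X\}\!\!\} = \{\!\!\{d^{X'}(\sigma') \mid \sigma' \in X'\}\!\!\}$. The hypothesis $X \ncong_d X'$ is therefore exactly the statement that these two multisets differ.

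Concretely, I set $A := X$ and $A' := X'$, regarding $X$ and $X'$ as the underlying sets on which the colorings are defined (for objects lifted via $F$, these are the elements of the graded posets $F(X), F(X')$). The assumption $c \sqsubseteq d$ is precisely the refinement hypothesis of Lemma~\ref{colorrefinementlemma}. That lemma then yields $\{\!\!\{c^X(\sigma) \mid \sigma \in X\}\!\!\} \neq \{\!\!\{c^{X'}(\sigma') \mid \sigma' \in X'\}\!\!\}$, which by definition is $X \ncong_c X'$.

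There is no substantive obstacle. The only point needing care is that refinement in Definition of Coloring Refinement is required across different objects (the implication $c^X(\sigma) = c^{X'}(\sigma') \Rightarrow d^X(\sigma) = d^{X'}(\sigma')$ holds for all inputs, not only within one object). This is exactly what the disjointness step in the proof of Lemma~\ref{colorrefinementlemma} uses, and the definition grants it, so the lemma applies verbatim.
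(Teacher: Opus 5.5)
Your proposal is correct and matches the paper's proof: both unpack $X \ncong_d X'$ as inequality of the full $d$-histograms and apply Lemma~\ref{colorrefinementlemma} with $A = X$, $A' = X'$ to conclude that the $c$-histograms differ. Your remark that the refinement $c \sqsubseteq d$ must hold across different objects is the right point to check, and the definition of coloring refinement does grant it.
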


\begin{proof}
By Definition~\ref{defncoloringupdaterule}, $X \ncong_d X'$ implies the multisets $\{\!\!\{d^{X}(\sigma) \mid \sigma \in X \}\!\!\}$ and $\{\!\!\{ d^{X'}(\sigma') \mid \sigma' \in X' \}\!\!\}$ are distinct. Applying Lemma~\ref{colorrefinementlemma} with $A=X$ and $A'=X'$, it follows that the corresponding $c$-multisets are also distinct, which means $X \ncong_c X'$.
\end{proof}

The injectivity of the hash function implies a fundamental invariant: distinct boundary sizes yield distinct colors.

\begin{lemma}
\label{samecatwlcoloringmusthavesamecardinality}
Let $\mathbb{c}^{F,X}_t$ denote the $F$-CatWL coloring at iteration $t$. For any $\sigma \in F(X)$ and $\sigma' \in F(X')$, if $|\mathcal{B}(\sigma)| \neq |\mathcal{B}(\sigma')|$, then $\mathbb{c}^{F,X}_t(\sigma) \neq \mathbb{c}^{F,X'}_t(\sigma')$ for all $t > 0$.
\end{lemma}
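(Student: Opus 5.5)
The plan is an induction on $t \geq 1$, using only the structure of the GWL update in Definition~\ref{def:gwl} and the injectivity of the perfect $\mathrm{HASH}$. The idea is that the boundary multiset $\mathbb{c}^{F,X}_t(\mathcal{B}(\sigma))$ is one argument of the hash, and the cardinality of a multiset is an invariant of the multiset itself. So different boundary sizes force different hash inputs, and hence different outputs.

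\emph{Base case $t=1$.} Since $\mathbb{c}^{F,X}_0$ is constant, the boundary argument of $\sigma$ is the multiset consisting of the single constant color $c_0$ repeated $|\mathcal{B}(\sigma)|$ times. The boundary argument of $\sigma'$ is the same color repeated $|\mathcal{B}(\sigma')|$ times. These two multisets have different cardinalities, so they are distinct. The tuples fed to $\mathrm{HASH}$ therefore differ in the boundary coordinate. By injectivity of the perfect hash, $\mathbb{c}^{F,X}_1(\sigma) \neq \mathbb{c}^{F,X'}_1(\sigma')$.

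\emph{Inductive step $t \to t+1$.} The argument does not actually need the inductive hypothesis. For any $t \geq 0$, the boundary argument $\{\!\!\{\mathbb{c}^{F,X}_t(\tau) \mid \tau \prec \sigma\}\!\!\}$ has cardinality exactly $|\mathcal{B}(\sigma)|$, whatever the colors are. Two multisets of different sizes cannot be equal. Hence the input tuples to $\mathrm{HASH}$ at step $t+1$ differ, and injectivity gives distinct colors. The same one-line argument therefore covers every $t > 0$ uniformly. Alternatively, one could invoke that colorings refine over time, $\mathbb{c}_{t+1} \sqsubseteq \mathbb{c}_t$, together with the base case.

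\emph{Main point to check.} The only real subtlety is the interpretation of the perfect $\mathrm{HASH}$. It must be injective on the full tuple of arguments, and it must be applied uniformly across both posets $F(X)$ and $F(X')$, meaning the same hash function is used for both. This is the standing convention for comparing color histograms in Definition~\ref{def:gwl}. I would state this explicitly, and note that multisets are compared as multisets, so their cardinality is recoverable from them. Once that is fixed, the lemma follows immediately.
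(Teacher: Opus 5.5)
Your proposal is correct and matches the paper's reasoning. The paper simply appeals to the injectivity of $\mathrm{HASH}$: the boundary multiset is one of its arguments and has cardinality $|\mathcal{B}(\sigma)|$ whatever the colors are, so your one-line argument covers every $t>0$ directly and the induction scaffolding is unnecessary. The same argument also applies verbatim to the restricted update rules (boundary plus upper, or boundary plus lower and upper), which are where the paper actually invokes this lemma, since those rules still include the boundary argument.
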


We now establish that under mild conditions on $F$, the full 4-adjacency update rule is equivalent to a reduced update rule using only boundary, upper, and lower adjacencies (omitting co-boundary).

\begin{lemma}\label{threecardnalities}
Let $F: \mathcal{C} \to \mathcal{P}$ be a functor such that $|\mathcal{B}(\sigma)| > 1$ for any $\sigma \in F(X)$ with $\dim(\sigma) > 0$. The $F$-CatWL using the full 4-adjacency update rule has the same expressive power as the restricted test using only boundary, lower, and upper adjacencies (omitting co-boundary).
\end{lemma}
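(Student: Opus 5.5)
The plan is to compare the two coloring update rules directly: a refinement relation in each direction, then Corollary~\ref{coloringrefinementimpliesatleastpowerfulisotest} together with Lemma~\ref{reparametrizationofisomorphismtest} to convert these into equality of expressive power. Write $\mathbb{c}_t$ for the full 4-adjacency $F$-CatWL coloring and $\mathbb{a}_t$ for the restricted coloring that uses only $\mathcal{B}$, $\mathcal{N}_{\downarrow}$ and $\mathcal{N}_{\uparrow}$.

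\textbf{Easy direction.} I would show $\mathbb{c}_t \sqsubseteq \mathbb{a}_t$ for all $t$ by induction on $t$. The base case holds because both colorings are constant at $t=0$. For the step, suppose $\mathbb{c}_{t+1}(\sigma)=\mathbb{c}_{t+1}(\sigma')$. Injectivity of HASH gives equality of the old colors and of every adjacency multiset under $\mathbb{c}_t$. By the induction hypothesis and Lemma~\ref{colorrefinementlemma} (applied in contrapositive to the multisets of colors and of color pairs), the corresponding $\mathbb{a}_t$-multisets over $\mathcal{B}$, $\mathcal{N}_{\downarrow}$ and $\mathcal{N}_{\uparrow}$ also coincide, so $\mathbb{a}_{t+1}(\sigma)=\mathbb{a}_{t+1}(\sigma')$. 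Corollary~\ref{coloringrefinementimpliesatleastpowerfulisotest} then gives that the full test is at least as powerful as the restricted one.

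\textbf{Hard direction.} I would prove the shifted refinement $\mathbb{a}_{t+1}\sqsubseteq \mathbb{c}_t$ by induction, and then apply Lemma~\ref{reparametrizationofisomorphismtest} with $\varphi(t)=t+1$ to conclude that $\mathbb{a}$ is at least as powerful as $\mathbb{c}$. The base case is trivial since $\mathbb{c}_0$ is constant.

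For the step, assume $\mathbb{a}_{t+2}(\sigma)=\mathbb{a}_{t+2}(\sigma')$. This gives equal $\mathbb{a}_{t+1}$-colors and equal $\mathbb{a}_{t+1}$-multisets over $\mathcal{B}$, $\mathcal{N}_{\downarrow}$ and $\mathcal{N}_{\uparrow}$. By the induction hypothesis, the $\mathbb{c}_t$-versions of these three multisets also agree. The only missing ingredient is the coboundary multiset $\mathbb{c}_t(\mathcal{C}(\sigma))$, which I will reconstruct from the upper adjacency.

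\textbf{Key step: recovering the coboundary.} Each $\tau$ with $\sigma\prec\tau$ appears in $\mathcal{N}_{\uparrow}(\sigma)$ exactly $|\mathcal{B}(\tau)|-1$ times, once for each $\sigma'\neq\sigma$ with $\sigma'\prec\tau$.
\begin{itemize}
\item Since $\dim(\tau)>0$, the hypothesis $|\mathcal{B}(\tau)|>1$ makes this multiplicity at least $1$, so no coboundary element is invisible.
\item The boundary size $|\mathcal{B}(\tau)|$ is a function of $\mathbb{a}_{t+1}(\tau)$. This is the analogue of Lemma~\ref{samecatwlcoloringmusthavesamecardinality} for the restricted rule, and it holds for any positive iteration because the boundary multiset is hashed. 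It is exactly why the shift by one is needed: at iteration $0$ the color carries no cardinality information.
\end{itemize}
Project the $\mathbb{a}_{t+1}$-multiset of $\mathcal{N}_{\uparrow}(\sigma)$ to its second coordinate and group by color class. Each class has a well-defined multiplicity factor $|\mathcal{B}(\tau)|-1$, and dividing by it recovers the multiset $\mathbb{a}_{t+1}(\mathcal{C}(\sigma))$. Equality for $\sigma$ and $\sigma'$ therefore transfers to the coboundaries, and via $\mathbb{a}_{t+1}\sqsubseteq\mathbb{c}_t$ and Lemma~\ref{colorrefinementlemma} to $\mathbb{c}_t(\mathcal{C}(\sigma))=\mathbb{c}_t(\mathcal{C}(\sigma'))$. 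Hence $\mathbb{c}_{t+1}(\sigma)=\mathbb{c}_{t+1}(\sigma')$.

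I expect this division argument to be the main obstacle. It requires $|\mathcal{B}(\tau)|$ to be constant on each color class, and it must be carried out at the $\mathbb{a}$-level rather than the $\mathbb{c}$-level so that the induction closes. The same bookkeeping must also be checked when the two elements lie in different objects $X$ and $X'$, which is why the whole argument is phrased with multisets and Lemma~\ref{colorrefinementlemma}.
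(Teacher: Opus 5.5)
Your proposal is correct and follows essentially the same route as the paper. You prove the shifted refinement $\mathbb{a}_{t+1}\sqsubseteq\mathbb{c}_t$ (the paper's $\mathbb{b}^F_{t+1}\sqsubseteq\mathbb{a}^F_t$) by induction, and you recover the coboundary multiset from the upper-adjacency multiset by dividing each color class's multiplicity by $|\mathcal{B}(\tau)|-1\geq 1$. You are also slightly more careful than the paper: you prove the easy direction by induction rather than asserting it, and you note explicitly that the boundary-cardinality lemma is needed for the restricted rule and only at positive iterations.
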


\begin{proof}
Let $\mathbb{a}^F_t$ and $\mathbb{b}^F_t$ denote the coloring update rules using 4-adjacencies and 3-adjacencies, respectively. Clearly $\mathbb{a}^F_t \sqsubseteq \mathbb{b}^F_t$ since $\mathbb{a}$ utilizes strictly more information (co-boundary). It suffices to show $\mathbb{b}^F_{t+1} \sqsubseteq \mathbb{a}^F_t$ by induction on $t$ (applying Lemma~\ref{reparametrizationofisomorphismtest}).

\textbf{Base case ($t=0$):} Trivial since $\mathbb{a}_0$ is constant.

\textbf{Inductive step:} Suppose $\mathbb{b}^F_{k+1} \sqsubseteq \mathbb{a}^F_k$ for $k=0, \dots, t$. Assume $\mathbb{b}^{F,X}_{t+2}(\sigma) = \mathbb{b}^{F,X'}_{t+2}(\sigma')$. By injectivity of the update function, the colors of $\sigma$, $\mathcal{B}(\sigma)$, $\mathcal{N}_{\downarrow}(\sigma)$, and $\mathcal{N}_{\uparrow}(\sigma)$ at time $t+1$ must match those of $\sigma'$. We show this implies the co-boundary colors also match: $\mathbb{b}^{F,X}_{t+1}(\mathcal{C}(\sigma)) = \mathbb{b}^{F,X'}_{t+1}(\mathcal{C}(\sigma'))$.

The equality of upper-adjacency multisets $\mathbb{b}^{F,X}_{t+1}(\mathcal{N}_{\uparrow}(\sigma)) = \mathbb{b}^{F,X'}_{t+1}(\mathcal{N}_{\uparrow}(\sigma'))$ implies equality of the set of tuples $\{ (b^{F,X}_{t+1}(\delta), b^{F,X}_{t+1}(\sigma)) \mid \delta \in \mathcal{C}(\sigma) \}$.
Using Lemma~\ref{samecatwlcoloringmusthavesamecardinality}, we can partition these multisets by the size of the boundary $|\mathcal{B}(\delta)|=n$. For a fixed $n$, the color $b^{F,X}_{t+1}(\delta)$ appears in the upper-adjacency multiset of $\sigma$ exactly $n-1$ times (since $\delta$ has $n$ boundary elements, one of which is $\sigma$).
Since $|\mathcal{B}(\delta)| > 1$, we have $n \ge 2$, so $n-1 \ge 1$. We can thus uniquely recover the multiset of co-boundary neighbors by dividing multiplicities by $n-1$. This implies:
\begin{equation*}
    \{\!\!\{ b^{F,X}_{t+1}(\delta) \mid \delta \in \mathcal{C}(\sigma) \}\!\!\} = \{\!\!\{ b^{F,X'}_{t+1}(\delta') \mid \delta' \in \mathcal{C}(\sigma') \}\!\!\}.
\end{equation*}
Thus, all four adjacency inputs match. By the induction hypothesis, $\mathbb{a}^F_t$ is refined by $\mathbb{b}^F_{t+1}$, so the inputs to the $\mathbb{a}$-update at $t$ are identical for $\sigma$ and $\sigma'$. Consequently, $\mathbb{a}^{F,X}_{t+1}(\sigma) = \mathbb{a}^{F,X'}_{t+1}(\sigma')$. This establishes that $\mathbb{b}^{F}_{t+2} \sqsubseteq \mathbb{a}^{F}_{t+1}$.
\end{proof}

Using Lemma~\ref{threecardnalities}, we can show that $F$-CatWL with boundary, upper adjacencies has the same expressive power as $F$-CatWL with 4-adjacencies.

\begin{lemma}\label{twocardnalities}
Let $F: \mathcal{C} \to \mathcal{P}$ be a functor such that $|\mathcal{B}(\sigma)| > 1$ for any $\sigma \in F(X)$ with $\dim(\sigma) > 0$. The $F$-CatWL test using all four adjacencies (boundary, co-boundary, lower, upper) has the same expressive power as the restricted test using only boundary and upper adjacencies (omitting co-boundary and lower).
\end{lemma}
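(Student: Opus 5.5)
Via Lemma~\ref{threecardnalities}, it suffices to compare the 3-adjacency rule (boundary, lower, upper) with the 2-adjacency rule (boundary, upper). Denote by $\mathbb{b}^F_t$ the 3-adjacency coloring and by $\mathbb{d}^F_t$ the boundary--upper coloring. Since $\mathbb{b}$ uses strictly more information, we have $\mathbb{b}^F_t \sqsubseteq \mathbb{d}^F_t$ immediately, by induction on $t$ and injectivity of $\mathrm{HASH}$. For the converse, we use Lemma~\ref{reparametrizationofisomorphismtest} and prove $\mathbb{d}^F_{2t} \sqsubseteq \mathbb{b}^F_t$, or the sharper $\mathbb{d}^F_{t+1} \sqsubseteq \mathbb{b}^F_t$ if it holds. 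Chaining this with Lemma~\ref{threecardnalities} gives equivalence with the full 4-adjacency test.

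The induction mirrors the proof of Lemma~\ref{threecardnalities}. The base case is trivial because all colorings start constant. For the inductive step, suppose $\mathbb{d}^{F,X}_{s}(\sigma)=\mathbb{d}^{F,X'}_{s}(\sigma')$ for the appropriate $s$. Injectivity of the hash yields equality of the previous-step colors of $\sigma$, of the boundary multisets $\mathbb{d}(\mathcal{B}(\sigma))$, and of the upper multisets $\mathbb{d}(\mathcal{N}_\uparrow(\sigma))$. The remaining task is to reconstruct the lower-adjacency multiset $\{\!\!\{(\mathbb{d}(\sigma''),\mathbb{d}(\tau)) \mid \tau\prec\sigma,\sigma''\}\!\!\}$ from these data, possibly at an earlier time step.

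We perform the reconstruction as follows. For each $\tau\in\mathcal{B}(\sigma)$, the color $\mathbb{d}_{s-1}(\tau)$ was itself hashed from $\tau$'s upper multiset $\{\!\!\{(\mathbb{d}_{s-2}(\sigma''),\mathbb{d}_{s-2}(\rho)) \mid \tau,\sigma''\prec\rho\}\!\!\}$. That alone does not expose the elements $\sigma''$ sharing $\tau$ as a face. Instead, we use that $\tau\prec\sigma$ means $\sigma$ is a co-boundary element of $\tau$. Under the hypothesis $|\mathcal{B}(\cdot)|>1$, the argument in Lemma~\ref{threecardnalities} recovers the co-boundary multiset $\mathcal{C}(\tau)$ from $\mathcal{N}_\uparrow(\tau)$ together with Lemma~\ref{samecatwlcoloringmusthavesamecardinality}. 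This requires exactly the dividing-by-$(n-1)$ trick: each co-boundary element $\delta$ of $\tau$ appears $|\mathcal{B}(\delta)|-1$ times. Hence, from $\mathbb{d}_{s-1}(\tau)$ we can decode the multiset of $\mathbb{d}_{s-3}$-colors of all $\sigma''\succ\tau$. Removing one copy of $\sigma$'s own color, which is known, gives the multiset of $\sigma''\neq\sigma$ paired with $\tau$. Collecting over $\tau\in\mathcal{B}(\sigma)$ with the boundary multiset then yields the lower-adjacency multiset at a lagged time. Lemma~\ref{colorrefinementlemma} allows passing multiset equalities through refinements.

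The main obstacle is the time-lag bookkeeping. The decoding loses several steps, so the natural statement is $\mathbb{d}^F_{t+c} \sqsubseteq \mathbb{b}^F_t$ for a fixed constant $c$ (likely $c=3$). The induction hypothesis must therefore be phrased with this lag, together with monotonicity $\mathbb{d}_{k+1}\sqsubseteq\mathbb{d}_k$, so that colors at different times can be compared. A second delicate point is dimension $0$: there $\mathcal{B}(\sigma)$ and $\mathcal{N}_\downarrow(\sigma)$ are empty, so no reconstruction is needed. The hypothesis $|\mathcal{B}(\sigma)|>1$ is only used for $\dim>0$ elements acting as co-boundaries, which is exactly where the division argument applies. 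Once the lag is fixed, the reparameterization $\varphi(t)=t+c$ in Lemma~\ref{reparametrizationofisomorphismtest} completes the proof.
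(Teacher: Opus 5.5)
Your decoding idea is the paper's own. The lower-adjacency multiset of $\sigma$ is assembled from the co-boundaries of its faces $\tau\in\mathcal{B}(\sigma)$. Each co-boundary is read off from $\mathcal{N}_{\uparrow}(\tau)$ by dividing multiplicities by $|\mathcal{B}(\rho)|-1\ge 1$, and this reaches $\sigma$ through boundary aggregation. Your explicit removal of $\sigma$'s own color is, if anything, cleaner than the paper's counting via $C_X(\sigma)$.

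The gap is the bookkeeping you flag as the main obstacle, and your resolution of it fails. From $\mathbb{d}_s(\sigma)$ you get self, boundary and upper data at time $s-1$. You get the lower-adjacency multiset only at time $s-2$ (at $s-3$ in your own accounting), because a lower neighbor $\sigma''$ is in general visible to $\sigma$ only along the two-hop route $\sigma''\to\tau\to\sigma$. So take the hypothesis $\mathbb{d}_{t+c}\sqsubseteq\mathbb{b}_t$ and assume $\mathbb{d}_{t+1+c}(\sigma)=\mathbb{d}_{t+1+c}(\sigma')$. You then only learn that the lower-adjacency multisets agree in $\mathbb{d}_{t+c-1}$-colors, and nothing says $\mathbb{d}_{t+c-1}\sqsubseteq\mathbb{b}_t$; monotonicity goes the wrong way. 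Each step of your 3-adjacency rule costs two steps of the boundary--upper rule, so the lag accumulates. No constant $c$ closes this induction, and $\varphi(t)=t+c$ is the wrong reparameterization.

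The fix is a linear lag. Prove $\mathbb{d}_{2t+1}\sqsubseteq\mathbb{b}_t$ and apply Lemma~\ref{reparametrizationofisomorphismtest} with $\varphi(t)=2t+1$. This is exactly the paper's $\mathbb{b}_{2t+1}\sqsubseteq\mathbb{a}_t$, where the paper's $\mathbb{b}$ is your $\mathbb{d}$. In the inductive step, $\mathbb{d}_{2t+3}(\sigma)$ gives self, boundary and upper data at time $2t+2$, hence at $2t+1$. It gives lower data at $2t+1$, and the hypothesis applies to all of them.

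The offset $+1$ also matters: your other candidate $\mathbb{d}_{2t}\sqsubseteq\mathbb{b}_t$ already fails at $t=1$. The reason is that $\mathbb{d}_1(\tau)$ is computed from upper pairs of constant time-$0$ colors. So $|\mathcal{B}(\rho)|$ is invisible, and the division trick cannot recover $|\mathcal{C}(\tau)|$. Concretely, in $I(H)$ let $\sigma=\{a,b\}$ with $b$ in no other hyperedge. Let $a$ lie in one further hyperedge of size $3$ in $H$, but in two further hyperedges of size $2$ in $H'$. Then $|\mathcal{N}_{\uparrow}(a)|=3$ in both, so $\mathbb{d}_2(\sigma)$ agrees. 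But $|\mathcal{N}_{\downarrow}(\sigma)|$ is $1$ versus $2$, so $\mathbb{b}_1(\sigma)$ differs.
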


\begin{proof}
Let $\mathbb{b}^{F}_t$ denote the restricted coloring update rule (using boundary and upper adjacencies) and $\mathbb{a}^{F}_t$ denote the rule using all four adjacencies. Clearly $\mathbb{a}^{F}_t \sqsubseteq \mathbb{b}^{F}_t$ since $\mathbb{a}$ utilizes strictly more information (co-boundary and lower adjacencies). It suffices to show $\mathbb{b}^{F}_{2t+1} \sqsubseteq \mathbb{a}^{F}_t$ by induction on $t$.

The base case ($t=0$) is trivial since $\mathbb{a}^{F}_0$ is constant. For the inductive step, assume $\mathbb{b}^{F}_{2k+1} \sqsubseteq \mathbb{a}^{F}_k$ for $k=0, \dots, t$. Suppose for contradiction that $\mathbb{b}^{F,X}_{2t+3}(\sigma) = \mathbb{b}^{F,X'}_{2t+3}(\sigma')$ but the lower adjacency multisets differ: $\mathbb{b}^{F,X}_{2t+1}(\mathcal{N}_{\downarrow}(\sigma)) \neq \mathbb{b}^{F,X'}_{2t+1}(\mathcal{N}_{\downarrow}(\sigma'))$.

By injectivity of the update function, the colors of $\sigma$, $\mathcal{B}(\sigma)$, and $\mathcal{N}_{\uparrow}(\sigma)$ at time $2t+1$ must match. Since the lower adjacency multisets differ, there exists some color pair $(\mathbb{C}_0, \mathbb{C}_1)$ appearing with different multiplicities in the lower adjacencies of $\sigma$ and $\sigma'$. We define the auxiliary multiset $A_X(\delta)$ for any $\delta$ as:
\begin{equation*}
    A_X(\delta) := \{\!\!\{ (\mathbb{b}_{2t+1}^{I,X}(\psi), \mathbb{b}_{2t+1}^{I,X}(\delta)) \mid \psi \in \mathcal{C}(\delta) \}\!\!\}
\end{equation*}
and aggregate these over the boundary of $\gamma$:
\begin{equation*}
    C_X(\gamma) := \{\!\!\{ |A_X(\delta)| \mid \delta \in \mathcal{B}(\gamma) \}\!\!\}.
\end{equation*}
The sum of elements in $C_X(\sigma)$ corresponds to the total count of tuples in $\mathbb{b}_{2t+1}(\mathcal{N}_{\downarrow}(\sigma))$. A discrepancy in the lower adjacency multisets thus implies $C_X(\sigma) \neq C_{X'}(\sigma')$, which contradicts the assumed equality of $\sigma$ and $\sigma'$ colors at step $2t+3$, as this information would be propagated through the boundary aggregation.

\begin{proposition}\label{technicalproposition}
If $C_X(\sigma) \neq C_{X'}(\sigma')$, then $\mathbb{b}^{F,X}_{2t+3}(\sigma) \neq \mathbb{b}^{F,X'}_{2t+3}(\sigma')$.
\end{proposition}

\begin{proof}
Define a new coloring $c_X(\delta) := |A_X(\delta)|$. We first show that $\mathbb{b}_{2t+2}^{F} \sqsubseteq c$. Let $\delta \in F(X)$ and $\delta' \in F(X')$ be such that $|A_X(\delta)| > |A_{X'}(\delta')|$. If $\mathbb{b}^{F,X'}_{2t+1}(\delta') \neq \mathbb{C}_1$, then since $|A_X(\delta)| > 0$, we have $\mathbb{b}^{F,X}_{2t+1}(\delta) = \mathbb{C}_1$, so their colors differ at step $2t+1$ (and thus at $2t+2$). If $\mathbb{b}^{F,X'}_{2t+1}(\delta') = \mathbb{C}_1$, then $\mathbb{b}^{F,X}_{2t+1}(\delta) = \mathbb{C}_1$ as well. By Lemma~\ref{samecatwlcoloringmusthavesamecardinality} and the assumption on $F$, $|\mathcal{B}(\delta)| = |\mathcal{B}(\delta')| = n > 1$. The color $\mathbb{C}_0$ appears $|A_X(\delta)| \times (n-1)$ times in the upper-adjacency multiset of $\delta$ and $|A_{X'}(\delta')|\times (n-1)$ in the upper-adjacency multiset of $\delta'$. Since $n > 1$ and the counts differ, the upper-adjacency multisets differ, implying $\mathbb{b}_{2t+2}^{F,X}(\delta) \neq \mathbb{b}_{2t+2}^{F,X'}(\delta')$.

Having established $\mathbb{b}_{2t+2}^{F} \sqsubseteq c$, we apply the refinement property to the inequality $C_X(\sigma) \neq C_{X'}(\sigma')$. The difference in $c$-multisets over the boundary implies a difference in $\mathbb{b}_{2t+2}^{F}$-multisets over the boundary. Consequently, the inputs for the update rule at $\sigma$ and $\sigma'$ differ, yielding $\mathbb{b}_{2t+3}^{F,X}(\sigma) \neq \mathbb{b}_{2t+3}^{F,X'}(\sigma')$.
\end{proof}

Proposition~\ref{technicalproposition} establishes that $C_X(\sigma) \neq C_{X'}(\sigma')$ implies $\mathbb{b}_{2t+3}^{F,X}(\sigma) \neq \mathbb{b}_{2t+3}^{F,X'}(\sigma')$, which contradicts our initial assumption. Therefore, we must have equality of the lower-adjacency multisets: $\mathbb{b}^{F,X}_{2t+1}(\mathcal{N}_{\downarrow}(\sigma)) = \mathbb{b}^{F,X'}_{2t+1}(\mathcal{N}_{\downarrow}(\sigma'))$. Since we have now shown that $\sigma$ and $\sigma'$ have identical $\mathbb{b}_{2t+1}^F$-color multisets for all four adjacencies, and since $\mathbb{b}_{2t+1}^F \sqsubseteq \mathbb{a}_t^F$ (by the induction hypothesis), it follows that their $\mathbb{a}_t^F$-color multisets are also identical. Consequently, $\mathbb{a}^{F,X}_{t+1}(\sigma) = \mathbb{a}^{F,X'}_{t+1}(\sigma')$, which completes the inductive step that $\mathbb{b}^{F}_{2t+3} \sqsubseteq \mathbb{a}^{F}_{t+1}$.
\end{proof}

\section{$F$-HIN Architecture}
\label{app:FHINarchitecture}
Given a functor $F: \mathbf{Hyp} \to \mathbf{Poset}$, an $F$-CatMPN is termed an $F$-HIN if its message passing functions take the following specific forms:

\begin{equation}
    \begin{aligned}
    \mathbb{m}_{t}^{F,X}(\mathcal{B}(\sigma)) &= \mathrm{MLP}^{F}_{t, \mathcal{B}}\left((1+\epsilon_{\mathcal{B}})\mathbb{x}^{F,X}_t(\sigma) + \sum_{\tau \in \mathcal{B}(\sigma)}\mathbb{x}^{F,X}_t(\tau)  \right) \\
    \mathbb{m}_{t}^{F,X}(\mathcal{C}(\sigma)) &= \mathrm{MLP}^{F}_{t, \mathcal{C}}\left((1+\epsilon_{\mathcal{C}})\mathbb{x}^{F,X}_t(\sigma)+ \sum_{\tau \in \mathcal{C}(\sigma)}\mathbb{x}^{F,X}_t(\tau)  \right) \\
    \mathbb{m}_{t}^{F,X}(\mathcal{N}_{\downarrow}(\sigma)) &= \mathrm{MLP}^{F}_{t, \downarrow}\left((1+\epsilon_{\downarrow})\mathbb{x}^{F,X}_t(\sigma) + \sum_{(\tau, \sigma') \in \mathcal{N}_{\downarrow}(\sigma)}\mathrm{MLP}_{t,D}(\mathbb{x}^{F,X}_t(\sigma') \mathbin{\|} \mathbb{x}^{F,X}_t(\tau))  \right) \\
    \mathbb{m}_{t}^{F,X}(\mathcal{N}_{\uparrow}(\sigma)) &= \mathrm{MLP}^{F}_{t, \uparrow}\left((1+\epsilon_{\uparrow})\mathbb{x}^{F,X}_t(\sigma) + \sum_{(\tau, \sigma') \in \mathcal{N}_{\uparrow}(\sigma)}\mathrm{MLP}_{t,U}(\mathbb{x}^{F,X}_t(\sigma') \mathbin{\|} \mathbb{x}^{F,X}_t(\tau))  \right)
    \end{aligned}
\end{equation}
Here, $\mathrm{MLP}^{F}_{t, \bullet}$ denotes a 2-layer Perceptron and $\mathrm{MLP}_{t, \bullet}$ denotes a 1-layer Perceptron. The learnable parameters include the weights of these MLPs and the scalars $\epsilon_{\mathcal{B}}, \epsilon_{\mathcal{C}}, \epsilon_{\downarrow}, \epsilon_{\uparrow}$.

\section{Comparison of HWL, $I$-CatWL, and $S$-CatWL}
\label{app:comparisonofHWLCatWL}
In this section, we prove that $I$-CatWL, $S$-CatWL are not less powerful than HWL.

\begin{theorem}\label{IWLnotlesspowerfulthanGWL}$I$-CatWL is not less powerful than HWL.\end{theorem}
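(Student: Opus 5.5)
The plan is to prove two things. The first is the refinement $I\text{-CatWL} \sqsubseteq \mathrm{HWL}$: every pair separated by HWL is separated by $I$-CatWL. The second is the separation required by the definition of ``not less powerful'': an explicit pair of hypergraphs $H, H'$ with $H \not\cong_{I} H'$ but $H \cong_{\mathrm{HWL}} H'$. Both steps are phrased through colorings on the incidence poset $I(H) = (V \amalg E, \prec)$, so the machinery of Appendix~\ref{app:equivalenceofadjstrs} applies.

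\textbf{Step 1 (refinement).} First I write HWL as a coloring update rule on $V \amalg E$, which is possible because it is a bipartite scheme. Node colors are updated by hashing the multiset of colors of incident hyperedges. Hyperedge colors are obtained by hashing the multiset of colors of their nodes. Under $I$ these multisets are exactly $\mathcal{C}(v)$ and $\mathcal{B}(e)$.

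Next I show by induction on $t$ that the $I$-CatWL coloring $\mathbb{c}^{I}_{t}$ refines the HWL coloring $h_{t}$, possibly after a shift of indices, i.e.\ $\mathbb{c}^{I}_{t+s} \sqsubseteq h_t$ for a fixed $s$. This is harmless by Lemma~\ref{reparametrizationofisomorphismtest}. The base case holds because $h_0$ is constant; if HWL instead starts from dimension-sensitive colors, I take $s=1$ and use Lemma~\ref{samecatwlcoloringmusthavesamecardinality} together with the fact that $\mathcal{B}$ is empty exactly on nodes.

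For the inductive step, suppose $\mathbb{c}^{I}_{t+1}(\sigma) = \mathbb{c}^{I}_{t+1}(\sigma')$. Injectivity of $\mathrm{HASH}$ gives equality of the $\mathcal{B}$- and $\mathcal{C}$-multisets at time $t$. Lemma~\ref{colorrefinementlemma} transports these equalities to the coarser coloring $h_t$, so the HWL inputs agree and $h_{t+1}(\sigma) = h_{t+1}(\sigma')$. Corollary~\ref{coloringrefinementimpliesatleastpowerfulisotest}, applied at every iteration, then yields $H \cong_{I} H' \Rightarrow H \cong_{\mathrm{HWL}} H'$.

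\textbf{Step 2 (separating pair).} I look for information that $I$-CatWL retains and HWL discards, and build a small pair on it. There are two candidate sources. The first is the element's own previous color, which $I$-CatWL always hashes and HWL's hyperedge update may omit. The second is multi-hyperedges: since $f : E \to 2^V \setminus \emptyset$ need not be injective, two hyperedges may have the same vertex set. Upper and lower adjacencies see such pairs as distinct elements, and the histogram of $I(H)$ counts them over $V \amalg E$ jointly.

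Concretely, I would first try two hypergraphs with the same number of nodes and hyperedges and identical node-degree and hyperedge-size profiles. In one, a pair of equal hyperedges is present; in the other, the sizes are redistributed so that HWL's node-side and edge-side histograms coincide at every step. I would then compute $\mathcal{N}_{\uparrow}$ or $\mathcal{N}_{\downarrow}$ at one iteration of $I$-CatWL and exhibit differing multisets. Failing that, I would use a pair in which the two hyperedge classes must be compared jointly in a single histogram, which HWL (reporting node colors only) never does.

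\textbf{Main obstacle.} The hard part is Step 2. Lemmas~\ref{threecardnalities} and~\ref{twocardnalities} show that, when all boundaries have size $>1$, lower and upper adjacencies add no power over boundary and coboundary. So a naive ``more adjacencies'' argument cannot separate the tests. The counterexample has to exploit exactly where HWL's precise definition (as in \citep{feng2024hypergraph}) loses information: singleton hyperedges, where $|\mathcal{B}(e)| = 1$ violates the hypothesis of those lemmas; dropping the self color; or reporting only node histograms. I would therefore first fix the exact HWL update from that reference and pick the example accordingly. Singleton hyperedges are the most promising first attempt, since there the lemma's hypothesis fails.
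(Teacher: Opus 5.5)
\textbf{Step 2 is missing, and the stated definition blocks it.}

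Your Step~1 is not needed. In the paper's sense, ``not less powerful'' only requires one pair $H,H'$ with $H\not\cong_I H'$ but $H\cong_{\mathrm{HWL}}H'$; the refinement would only upgrade this to ``strictly more powerful''. The genuine gap is Step~2: you never exhibit such a pair, and that pair is the entire content of the theorem.

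Moreover, none of the mechanisms you list can supply one if $\mathcal{N}_{\downarrow},\mathcal{N}_{\uparrow}$ are indexed by pairs $(\sigma',\tau)$, as Definition~\ref{def:gwl} literally says. In $I(H)$, take a hyperedge $e$ and a node $v\in e$:
\begin{itemize}
\item the lower-adjacent colors of $e$ through $v$ are the coboundary multiset of $v$ with one copy of $c(e)$ removed;
\item the upper pairs of $v$ through $e$ come from the boundary multiset of $e$ with one copy of $c(v)$ removed.
\end{itemize}
So plain boundary/coboundary refinement determines every pair-indexed multiset, up to a constant delay; that refinement is exactly HWL. Under this reading $I$-CatWL collapses to HWL and no separating pair exists. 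The recovery argument needs no hypothesis $|\mathcal{B}(\sigma)|>1$. Hence singleton hyperedges, repeated hyperedges, self-colors, or the joint $V\amalg E$ histogram are no way out. Also, Lemmas~\ref{threecardnalities} and~\ref{twocardnalities} go the other way from what you say: they make coboundary and lower adjacency redundant given boundary and upper, not the converse.

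\textbf{The idea you are missing.} The paper's proof counts $\mathcal{N}_{\downarrow}(e)$ over \emph{distinct} lower-adjacent hyperedges, one entry each. That count is not recoverable from boundary/coboundary data.

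The paper takes the HWL-equivalent pair of \citet{zhangimproved}:
\begin{itemize}
\item $H$ has $e_1=\{v_1,v_2,v_6\}$, $e_2=\{v_2,v_3\}$, $e_3=\{v_3,v_4,v_5\}$, $e_4=\{v_5,v_6\}$;
\item $H'$ consists of two disjoint copies of a triple together with a $2$-element sub-hyperedge, e.g.\ $e_1'=\{v_1',v_2',v_3'\}\supset e_2'=\{v_2',v_3'\}$.
\end{itemize}
At iteration~1, the size-$2$ hyperedge $e_2$ meets two different hyperedges ($e_1$ at $v_2$, $e_3$ at $v_3$). This gives lower-adjacency colors $\{\!\!\{\ast,\ast\}\!\!\}$. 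By contrast, $e_2'$ meets only $e_1'$, giving $\{\!\!\{\ast\}\!\!\}$. Injectivity of $\mathrm{HASH}$ then separates the histograms.

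Under the literal pair indexing, both multisets would be $\{\!\!\{(\ast,\ast),(\ast,\ast)\}\!\!\}$. Indeed, $I(H)$ and $I(H')$ both doubly cover the incidence poset of a single triple-plus-sub-edge, so every pair-indexed statistic agrees. To complete your proof, you must adopt the neighbor-counting convention for $\mathcal{N}_{\downarrow}$ explicitly and then carry out this iteration-$1$ computation.
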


\begin{proof}Consider the hypergraphs $H$ and $H'$ in Figure~\ref{fig:notlesspowerfulexamplehypergraphs} with initial constant coloring $\ast$. As shown by \citet{zhangimproved}, $H$ and $H'$ are isomorphic with respect to the HWL test. To demonstrate that they are non-isomorphic under $I$-CatWL, it suffices to show that the colorings $\mathbb{c}^{I,H}_{1}(e_2)$ and $\mathbb{c}^{I,H'}_{1}(e'_2)$ are distinct, where $e_2$ and $e'_2$ are the unique hyperedges of cardinality 2 in $H$ and $H'$ respectively.

We first examine the colors aggregated from the 4-adjacencies of $e_2$ in $H$ at iteration 1. The boundary colors are $\mathbb{c}^{I,H}_{0}(\mathcal{B}(e_2)) = \{\!\!\{ \mathbb{c}^{I,H}_0(v_2), \mathbb{c}^{I,H}_0(v_3) \}\!\!\} = \{\!\!\{ \ast, \ast \}\!\!\}$, and the lower-adjacency colors are $\mathbb{c}^{I,H}_{0}(\mathcal{N}_{\downarrow}(e_2)) = \{\!\!\{ \mathbb{c}^{I,H}_0(e_1), \mathbb{c}^{I,H}_0(e_3) \}\!\!\} = \{\!\!\{ \ast, \ast \}\!\!\}$. Both the co-boundary $\mathcal{C}(e_2)$ and upper-adjacency $\mathcal{N}_{\uparrow}(e_2)$ sets are empty, yielding empty color multisets.

In contrast, for the hyperedge $e'_2$ in $H'$, while the boundary colors remain $\mathbb{c}^{I,H'}_{0}(\mathcal{B}(e'_2)) = \{\!\!\{ \ast, \ast \}\!\!\}$, the lower-adjacency differs. Specifically, $\mathbb{c}^{I,H'}_{0}(\mathcal{N}_{\downarrow}(e'_2)) = \{\!\!\{ \mathbb{c}^{I,H'}_0(e'_1) \}\!\!\} = \{\!\!\{ \ast \}\!\!\}$, containing only a single neighbor. The co-boundary and upper-adjacency sets remain empty.

Consequently, applying the update rule yields distinct hash values:
\begin{equation}
\mathbb{c}^{I,H}_{1}(e_2) = \mathrm{HASH}\left(\ast, \{\!\!\{\ast,\ast \}\!\!\}, \emptyset, \{\!\!\{\ast,\ast \}\!\!\}, \emptyset \right) 
\neq 
\mathrm{HASH}\left(\ast, \{\!\!\{\ast,\ast \}\!\!\}, \emptyset, \{\!\!\{\ast \}\!\!\}, \emptyset \right) = \mathbb{c}^{I,H'}_{1}(e'_2).
\end{equation}
By the injectivity of $\mathrm{HASH}$, the resulting color multisets are distinct, meaning $I$-CatWL successfully distinguishes $H$ and $H'$. This confirms that $I$-CatWL is not less powerful than HWL.
\end{proof}

\begin{figure}[t]
    \centering
    
    \begin{tikzpicture}
        % Define styles for hypergraphs
        \tikzstyle{hg_node}=[circle, fill=black, inner sep=0pt, minimum size=3pt]
        \tikzstyle{hg_edge}=[draw=blue!50, fill=blue!10, rounded corners=5pt, ellipse, dotted, inner sep=10pt]

        % Hypergraph X (Top Left)
        \node (X) at (0, 3) {
            \begin{tikzpicture}[scale=0.6]
                % Hyperedge e1 = {v1, v2, v3}
                \fill[hg_edge] (0,4) -- (-3.5,0.2) -- (3.5,0.2) -- cycle;
                \fill[hg_edge] (0,-4) -- (-3.5,-0.2) -- (3.5,-0.2) -- cycle;
                \fill[hg_edge] (1.5,2) -- (2.5,2) -- (2.5,-2) -- (1.5,-2) -- cycle;
                \fill[hg_edge] (-1.5,2) -- (-2.5,2) -- (-2.5,-2) -- (-1.5,-2) -- cycle;
                
                \node[hg_node] (v1) at (0,2.73) {};
                \node[hg_node] (v2) at (2,1) {};
                \node[hg_node] (v3) at (2,-1) {};
                \node[hg_node] (v4) at (0,-2.73) {};
                \node[hg_node] (v5) at (-2,-1) {};
                \node[hg_node] (v6) at (-2,1) {};
                
                % \node at (-1.2, 1.3) {\small $e_1$}; 
                % \node at (1.2, 1.3) {\small $e_2$}; 
                \node at (0, 3.23) {\small $v_1$};
                \node at (2, 1.5) {\small $v_2$};
                \node at (2, -1.5) {\small $v_3$};
                \node at (0, -3.23) {\small $v_4$};
                \node at (-2, -1.5) {\small $v_5$};
                \node at (-2, 1.5) {\small $v_6$};
                \node at (0, 1.5) {\small $e_1$};
                \node at (0, -1.5) {\small $e_3$};
                \node at (2, 0) {\small $e_2$};
                \node at (-2,0) {\small $e_4$};
                \node at (2, -3) {$H$};
                
            \end{tikzpicture}
        };

        % Poset F(X) (Top Right)
        \node (FX) at (7, 3) {
                      \begin{tikzpicture}[scale=0.6]
                % Hyperedge e1 = {v1, v2, v3}

                \fill[hg_edge, fill=red!10, draw=red!50] (0,4) -- (-3.5,0.2) -- (3.5,0.2) -- cycle;
                \fill[hg_edge, fill=red!10, draw=red!50] (0,-4) -- (-3.5,-0.2) -- (3.5,-0.2) -- cycle;
                \fill[hg_edge, fill=red!10, draw=red!50] (-1.2,3.03) -- (-0.3,3.93) -- (3.2,0.7) -- (2.3, -0.2) -- cycle;

                \fill[hg_edge, fill=red!10, draw=red!50] (1.2,-3.03) -- (0.3,-3.93) -- (-3.2,-0.7) -- (-2.3, 0.2) -- cycle;

                \node[hg_node] (v1) at (0,2.73) {};
                \node[hg_node] (v2) at (2,1) {};
                \node[hg_node] (v3) at (2,-1) {};
                \node[hg_node] (v4) at (0,-2.73) {};
                \node[hg_node] (v5) at (-2,-1) {};
                \node[hg_node] (v6) at (-2,1) {};
                
                % \node at (-1.2, 1.3) {\small $e_1$}; 
                % \node at (1.2, 1.3) {\small $e_2$}; 
                \node at (0, 3.23) {\small $v_2'$};
                \node at (1.5, 1) {\small $v_3'$};
                \node at (1.5, -1) {\small $v_4'$};
                \node at (0, -3.23) {\small $v_5'$};
                \node at (-1.5, -1) {\small $v_6'$};
                \node at (-1.5, 1) {\small $v_1'$};
                \node at (0, 1) {\small $e_1'$};
                \node at (0, -1) {\small $e_3'$};
                \node at (0.75, 2) {\small $e_2'$};
                \node at (-0.75, -2) {\small $e_4'$};
                \node at (2, -3) {$H'$};
                
            \end{tikzpicture}
        };

    \end{tikzpicture}
    
    \caption{Two non-isomorphic hypergraphs $H$ and $H'$ that are indistinguishable by HWL test but distinguished by $I$-CatWL test and $S$-CatWL test.}
    \label{fig:notlesspowerfulexamplehypergraphs}
\end{figure}
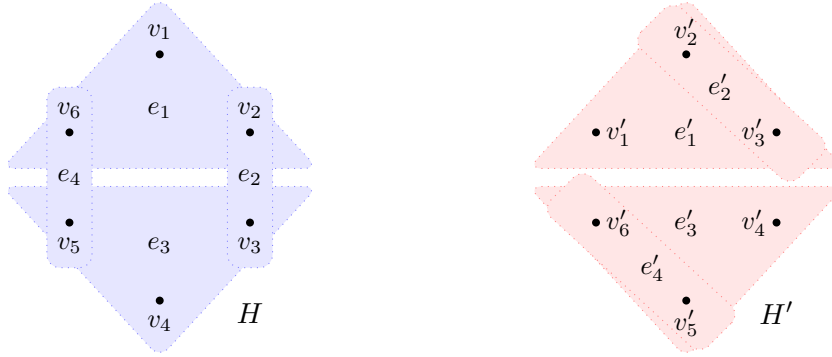

\begin{theorem}\label{SSWLnotlesspowerfulthanGWL}$S$-CatWL is not less powerful than HWL. 
    
\end{theorem}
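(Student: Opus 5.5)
The plan is to reuse the pair $H, H'$ of Figure~\ref{fig:notlesspowerfulexamplehypergraphs}. By \citet{zhangimproved} they are already known to be HWL-indistinguishable, as in the proof of Theorem~\ref{IWLnotlesspowerfulthanGWL}. It then suffices to exhibit an iteration $t$ at which the $S$-CatWL color histograms of $S(H)$ and $S(H')$ differ.

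Reading the figure, $H$ has hyperedges $e_1=\{v_1,v_2,v_6\}$, $e_3=\{v_3,v_4,v_5\}$, $e_2=\{v_2,v_3\}$ and $e_4=\{v_5,v_6\}$. In $H$ each $2$-edge bridges the two $3$-edges. In $H'$, by contrast, $e_2'=\{v_2',v_3'\}\subset e_1'$ and $e_4'=\{v_5',v_6'\}\subset e_3'$, so each $2$-edge sits inside a $3$-edge. I will first write out both lifted posets explicitly using Definition~\ref{def:symmetric_functor}:
\begin{itemize}
\item six vertices $\{v\}_v$;
\item three $1$-simplices for each $3$-edge and one for each $2$-edge;
\item one $2$-simplex for each $3$-edge.
\end{itemize}
By Lemma~\ref{twocardnalities} (the hypothesis $|\mathcal{B}(\sigma)|>1$ holds in $S(H)$), I may track only boundary and upper adjacencies, which keeps the bookkeeping small.

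Next I will run the refinement by hand and record the color classes at each step.
\begin{itemize}
\item Step $1$ separates $1$-simplices lying in a $3$-edge (nonempty coboundary and upper adjacency) from those coming from a $2$-edge. It also separates vertices of degree $3$ (namely $v_2,v_3,v_5,v_6$ and their primed counterparts) from vertices of degree $2$.
\item Step $2$ separates the $1$-simplices inside $3$-edges according to whether their endpoints are both of degree $3$ or not.
\item The aim is then to find a statistic that differs between the two sides and that the boundary and upper-adjacency multisets of some $2$-simplex, or of a $2$-edge simplex, can see.
\end{itemize}
The structural difference to exploit is this. In $S(H')$ the $2$-edge simplex $\{v_2',v_3'\}_{e_2'}$ has exactly the same boundary as the $1$-simplex $\{v_2',v_3'\}_{e_1'}$; that is, two distinct $1$-simplices are parallel. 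No pair of $1$-simplices in $S(H)$ has this property. The first thing I would try is to propagate this to the $2$-simplex $e_1'$ through its upper-adjacency pairs (face, coface), aiming for a difference at step $3$ or $4$.

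I expect this last step to be the main obstacle. The local multisets look identical for several rounds: both sides have the same degree sequences and the same per-vertex incidence counts, so the distinction resembles the classical "$6$-cycle versus two triangles" phenomenon, which pure color refinement can miss. If the hand computation stabilises without separating the two posets, I will switch to a different witness pair rather than force this one. A natural choice would be a pair of HWL-equivalent hypergraphs whose hyperedge cardinalities or intersection sizes differ. Since $S$ stratifies by cardinality and $|\mathcal{B}(\sigma)|$ is a color invariant by Lemma~\ref{samecatwlcoloringmusthavesamecardinality}, such a difference is forced into the histogram already at step $1$ or $2$. Using injectivity of $\mathrm{HASH}$, I would then conclude exactly as in Theorem~\ref{IWLnotlesspowerfulthanGWL}.
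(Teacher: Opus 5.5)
Your proposal is not yet a proof, and neither of its two routes can be completed as described. For the primary route, sort the elements of $S(H)$ and $S(H')$ into six classes: vertices with two cofaces ($P$) and vertices with three ($Q$); 1-simplices of a 3-edge with one endpoint in $Q$ ($R$) or two ($R'$); the simplices of the 2-edges ($D$); and the 2-simplices ($T$). The class sizes are $2,4,4,2,2,2$ on both sides. This partition is equitable for boundary and pair-counted upper adjacency, and in fact for all four adjacencies. For example, every $Q$-vertex sees upper pairs of types $(P,R),(Q,R'),(Q,D)$ on both sides. Likewise, both $\{v_2,v_3\}_{e_2}$ and $\{v_2',v_3'\}_{e_2'}$ see lower pairs $(R,Q)$ twice and $(R',Q)$ twice. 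So the refinement you set up stalls exactly at the partition of your step 2. The parallel pair $\{v_2',v_3'\}_{e_1'},\{v_2',v_3'\}_{e_2'}$ never becomes visible through (face, coface) pairs. The fallback fails too. HWL colours each hyperedge by the multiset of its vertex colours, so HWL-equivalent hypergraphs always have the same hyperedge-cardinality histogram. Moreover, $|\mathcal{B}(\sigma)|$ carries no information about intersection sizes. Indeed $H,H'$ already differ there ($|e_1'\cap e_2'|=2$, while all intersections in $H$ have size at most $1$), and that is exactly the difference that fails to register.

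The paper keeps the adjacency you discard and argues in one step. At iteration 1 it compares $\mathcal{N}_{\downarrow}$ of the 2-edge simplices. On one side, $\{v_2,v_3\}_{e_2}$ has four lower neighbours $\{v_1,v_2\}_{e_1},\{v_2,v_6\}_{e_1},\{v_3,v_4\}_{e_3},\{v_3,v_5\}_{e_3}$. On the other, $\{v_2',v_3'\}_{e_2'}$ has three: $\{v_1',v_2'\}_{e_1'},\{v_1',v_3'\}_{e_1'},\{v_2',v_3'\}_{e_1'}$. The drop comes precisely from the parallel simplex you identified, which is counted once although it shares both faces.

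Be aware that this count treats $\mathcal{N}_{\downarrow}(\sigma)$ as a set of neighbours $\sigma'$. Definition~\ref{def:gwl} literally takes a multiset over pairs $(\sigma',\tau)$, which is also the convention in the proof of Lemma~\ref{twocardnalities} and in your own bookkeeping. Under that reading, $\{v_2',v_3'\}_{e_1'}$ contributes once via $\{v_2'\}_{v_2'}$ and once via $\{v_3'\}_{v_3'}$. Both sides then have four pairs, and by the equitable partition above $H,H'$ are not separated at all.

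Your suspicion is therefore justified, but a proof still has to do one of two things your proposal does not. One option is to commit explicitly to distinct-neighbour counting. Then the one-step lower-adjacency comparison suffices, but Lemma~\ref{twocardnalities}, which is proved for pair counting, must not be used to discard $\mathcal{N}_{\downarrow}$. The other option is to exhibit a different HWL-equivalent pair together with an explicit computation showing that the $S$-CatWL histograms split.
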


\begin{proof}We employ the same counterexample from Theorem~\ref{IWLnotlesspowerfulthanGWL} to demonstrate that the hypergraphs $H$ and $H'$ are non-isomorphic under $S$-CatWL. It suffices to show that the colorings $\mathbb{c}^{S,H}_{1}(\{v_2, v_3\}_{e_2})$ and $\mathbb{c}^{S,H'}_{1}(\{v'_2, v'_3\}_{e'_2})$ are distinct, as these correspond to the unique maximal 1-dimensional simplices in their respective structures.

We first compute the aggregated colors from the 4-adjacencies of $\{v_2, v_3\}_{e_2}$ in $H$ at iteration 1. The boundary colors are $\mathbb{c}^{S,H}_{0}(\mathcal{B}(\{v_2, v_3\}_{e_2})) = \{\!\!\{ \mathbb{c}^{S,H}_0(\{v_2\}_{v_2}), \mathbb{c}^{S,H}_0(\{v_3\}_{v_3}) \}\!\!\}= \{\!\!\{ \ast, \ast \}\!\!\}$. The lower-adjacency colors are
\begin{equation}
\begin{split}
\mathbb{c}^{S,H}_{0}(\mathcal{N}_{\downarrow}(\{v_2, v_3\}_{e_2})) &= \{\!\!\{ \mathbb{c}^{S,H}_0(\{v_1, v_2\}_{e_1}), \mathbb{c}^{S,H}_0(\{v_2, v_6\}_{e_1}), \mathbb{c}^{S,H}_0(\{v_3, v_4\}_{e_3}), \mathbb{c}^{S,H}_0(\{v_3, v_5\}_{e_3}) \}\!\!\}\\
&= \{\!\!\{ \ast, \ast, \ast, \ast \}\!\!\}.    
\end{split}
\end{equation}
Both cobounary, upper-adjacency colors are empty.

In contrast, for $\{v'_2, v'_3\}_{e'_2}$ in $H'$, the boundary colors are identical: $\mathbb{c}^{S,H'}_{0}(\mathcal{B}(\{v'_2, v'_3\}_{e'_2})) = \{\!\!\{ \ast, \ast \}\!\!\}$. However, colors of lower-adjacency contain only three elements:

\begin{equation}
    \begin{split}
        \mathbb{c}^{S,H'}_{0}(\mathcal{N}_{\downarrow}(\{v'_2, v'_3\}_{e'_2})) &= \{\!\!\{ \mathbb{c}^{S,H'}_0(\{v'_1, v'_2\}_{e'_1}), \mathbb{c}^{S,H'}_0(\{v'_1, v'_3\}_{e'_1}), \mathbb{c}^{S,H'}_0(\{v'_2, v'_3\}_{e'_1}) \}\!\!\} \\
        &= \{\!\!\{ \ast, \ast, \ast \}\!\!\}.
    \end{split}
\end{equation}
The colors of co-boundary and up-adjacency remain empty.

Consequently, the update rule produces distinct hash values. Specifically, observing that the lower-adjacency inputs differ (4 elements vs 3 elements), we have:
\begin{equation}
\mathrm{HASH}(\dots, \{\!\!\{\ast, \ast, \ast, \ast \}\!\!\}, \dots) \neq \mathrm{HASH}(\dots, \{\!\!\{\ast,\ast,\ast \}\!\!\}, \dots),
\end{equation}
which implies $\mathbb{c}^{S,H}_{1}(\{v_2, v_3\}_{e_2}) \neq \mathbb{c}^{S,H'}_{1}(\{v'_2, v'_3\}_{e'_2})$. By the injectivity of $\mathrm{HASH}$, this confirms that $S$-CatWL successfully distinguishes $H$ and $H'$, whereas HWL does not.
\end{proof}

\section{Equivalence of $F$-CatWL and $F$-CatMPN}
\label{app:equivalenceofCatWLCatMPN}
In this section, we compare the expressive power of $F$-CatWL and $F$-CatMPN.

\begin{theorem}Let $F: \mathcal{C} \to \mathbf{Poset}$ be a functor. The $F$-CatWL is at least as powerful as the $F$-CatMPN. If the update and aggregation functions are injective and the network has sufficient depths and widths, the $F$-CatMPN has the same expressive power as $F$-CatWL.
\end{theorem}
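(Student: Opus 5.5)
The plan is the standard two-direction argument of \citet{xu2018powerful} and \citet{bodnar2021weisfeiler}, carried out on the graded poset $F(X)$. Both directions reduce to statements about colorings on $F(X)$ and $F(X')$, because $F$-CatWL and $F$-CatMPN first apply the same functor $F$ and then act only on the lifted posets. I would treat the feature map $h_t$ of an $F$-CatMPN as a coloring in the sense of Definition~\ref{defncoloringupdaterule}. Expressivity is then compared via the refinement order $\sqsubseteq$, using Corollary~\ref{coloringrefinementimpliesatleastpowerfulisotest} to pass from elementwise refinement to distinguishability of histograms.

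\textbf{First direction} ($F$-CatWL at least as powerful as $F$-CatMPN). I would prove by induction on $t$ that $c_t \sqsubseteq h_t$ jointly on $F(X)\amalg F(X')$. That is, $c^{F(X)}_t(\sigma)=c^{F(X')}_t(\sigma')$ implies $h^X_t(\sigma)=h^{X'}_t(\sigma')$.
\begin{itemize}[leftmargin=*,topsep=0pt,itemsep=-0.5ex]
    \item The base case holds because the initial features are constant, or are functions of the initial colors if features are present.
    \item For the inductive step, equal $c_{t+1}$ colors give, by injectivity of $\mathrm{HASH}$, equal $c_t$ colors of $\sigma,\sigma'$. They also give equal multisets $c_t(\mathcal{B})$ and $c_t(\mathcal{C})$, and equal multisets of pairs for $\mathcal{N}_\downarrow$ and $\mathcal{N}_\uparrow$.
    \item By the induction hypothesis these become equal multisets of $h_t$-values and $h_t$-pairs.
    \item The messages in \eqref{eq:msg_boundary}--\eqref{eq:msg_upper} are functions of the own feature together with such multisets, since $\mathrm{AGG}$ is permutation invariant. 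So all four messages agree, and hence so does $h_{t+1}=U(h_t,m_{\mathcal{B}},m_{\mathcal{C}},m_{\downarrow},m_{\uparrow})$.
\end{itemize}
Corollary~\ref{coloringrefinementimpliesatleastpowerfulisotest} then shows that if the network separates $X,X'$ at some layer, so does $F$-CatWL.

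\textbf{Converse} (injective case). I would show $h_t \sqsubseteq c_t$ by the same induction, now using injectivity to run the implications backwards. Equal $h_{t+1}$ gives equal inputs to $U$, because $U$ is injective. Injectivity of each message/aggregation pair, viewed as a function of the own feature and a multiset, then gives equal multisets of $h_t$-values (or pairs). By the induction hypothesis and Lemma~\ref{colorrefinementlemma}, these yield equal $c_t$-multisets, so the $\mathrm{HASH}$ inputs coincide and $c_{t+1}$ agrees. A readout that is injective on multisets finishes the argument at the graph level.

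\textbf{Main obstacle.} The hard step is justifying that injective aggregators actually exist with sum aggregation and MLPs, especially for the pair-valued neighborhoods $\mathcal{N}_\downarrow,\mathcal{N}_\uparrow$. I would first restrict to finite objects, so that the feature space at each layer is countable. I would then invoke the lemma of \citet{xu2018powerful}: there is $f$ with $(x,S)\mapsto(1+\epsilon)f(x)+\sum_{s\in S}f(s)$ injective for all but countably many $\epsilon$. For pairs I would first map $(h(\sigma'),h(\tau))$ injectively into a countable set, which is the role of $\mathrm{MLP}_{t,D}$ and $\mathrm{MLP}_{t,U}$ on the concatenation, and then apply the same lemma. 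The MLPs achieve this by universal approximation on the finite set of realized features, which is where ``sufficient depth and width'' enters. Combining the four messages by concatenation keeps the composite update injective.
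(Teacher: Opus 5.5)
Your proposal is correct and follows the paper's proof: a two-direction induction on $t$, showing $\mathbb{c}_t \sqsubseteq h_t$ from injectivity of $\mathrm{HASH}$ and $h_t \sqsubseteq \mathbb{c}_t$ from injectivity of the aggregation and update maps. Your extra steps only make explicit what the paper leaves implicit: the passage to histograms via Corollary~\ref{coloringrefinementimpliesatleastpowerfulisotest}, the injective readout, and the GIN-style construction of injective sum aggregators on countable feature sets, which replaces the paper's bare appeal to universal approximation.
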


\begin{proof}
Let $\mathbb{c}^{F,X}_k$ denote the $F$-CatWL coloring at iteration $k$, and let $h^{F,X}_k$ denote the $F$-CatMPN node features at layer $k$. We assume the MPN stabilizes or has sufficient layers such that $h^{F,X}_{k} = h^{F,X}_L$ for all $k > L$. We show equivalence by proving $\mathbb{c}^{F,X}_k \sqsubseteq h^{F,X}_k$ and $h^{F,X}_k \sqsubseteq \mathbb{c}^{F,X}_k$ via induction on $k$.

\begin{itemize}[leftmargin=*,topsep=0pt,itemsep=0.5ex]
    \item \textbf{$\mathbb{c}^{F,X}_k \sqsubseteq h^{F,X}_k$:} The base case $k=0$ is trivial. Assume the hypothesis holds for $0, \dots, t$. Suppose $\mathbb{c}^{F,X}_{t+1}(\sigma) = \mathbb{c}^{F,X'}_{t+1}(\sigma')$. The injectivity of the $\mathrm{HASH}$ function implies that the color multisets for $\sigma$ and all four adjacency types ($\mathcal{B}, \mathcal{C}, \mathcal{N}_{\downarrow}, \mathcal{N}_{\uparrow}$) are identical at step $t$. By the induction hypothesis ($h_t \sqsubseteq \mathbb{c}_t$), the corresponding MPN features $h_t$ must also be identical for $\sigma, \sigma'$ and their respective neighborhoods. Since $h_{t+1}$ is a function of these identical inputs, it follows that $h^{F,X}_{t+1}(\sigma) = h^{F,X'}_{t+1}(\sigma')$.

    \item \textbf{$h^{F,X}_k \sqsubseteq \mathbb{c}^{F,X}_k$:} The base case $k=0$ is trivial. Assume the hypothesis holds for $0, \dots, t$. Suppose $h^{F,X}_{t+1}(\sigma) = h^{F,X'}_{t+1}(\sigma')$. We assume the neural aggregation and update functions ($\mathrm{AGG}, \mathrm{UPDATE}$) are injective (which is possible by the Universal Approximation Theorem). This injectivity implies that the input feature multisets for $\sigma, \mathcal{B}, \mathcal{C}, \mathcal{N}_{\downarrow}, \mathcal{N}_{\uparrow}$ at layer $t$ must be identical. By the induction hypothesis ($\mathbb{c}_t \sqsubseteq h_t$), the corresponding WL colors $\mathbb{c}_t$ must also be identical. Since $\mathbb{c}_{t+1}$ is determined by these colors, we have $\mathbb{c}^{F,X}_{t+1}(\sigma) = \mathbb{c}^{F,X'}_{t+1}(\sigma')$.
\end{itemize}
\end{proof}

\section{Experiments Details}
\label{app:experiments}
\subsection{Dataset Details}
\label{app:dataset}

\subsubsection{Dataset Descriptions}
We evaluate our proposed models on six real-world hypergraph classification benchmarks derived from diverse domains, including social media, e-commerce, and collaboration networks. These datasets, originally introduced by \cite{feng2019hypergraph}, are selected to cover a wide range of topological characteristics, from sparse collaboration graphs to dense social ego-networks.

\paragraph{IMDB Datasets.} 
The IMDB datasets consist of hypergraphs constructed from movie collaboration networks, where each hypergraph represents a specific movie and nodes correspond to staff members, including directors and writers. The structural connectivity is determined by co-occurrence relationships, yielding two primary variants: \textbf{IMDB-Dir}, where hyperedges connect directors who have collaborated, and \textbf{IMDB-Wri}, which is constructed based on co-writer relationships. These structures are paired with two distinct classification objectives: identifying the movie's format, such as animation, documentary, and drama, denoted as \textbf{-Form}, or predicting its genre like adventure, crime, and family, denoted as \textbf{-Genre}. Consequently, we evaluate our models on four distinct benchmarks: \texttt{IMDB\_dir\_form}, \texttt{IMDB\_dir\_genre}, \texttt{IMDB\_wri\_form}, and \texttt{IMDB\_wri\_genre}.

\paragraph{Steam-Player.} 
Derived from the Steam gaming platform, this dataset models individual game players as hypergraphs where nodes correspond to owned games. Hyperedges are constructed by linking games that share common user-defined tags like RPG and Action. The associated binary classification task is to determine whether the player prefers single-player or multiplayer games.

\paragraph{Twitter-Friend.} 
The Twitter-Friend dataset focuses on social ego-networks collected from Twitter, where each hypergraph represents the network of a specific user. In this context, nodes denote the friends of the user, and hyperedges connect groups of friends who interact or form circles within that network. The objective is to classify the user's interest based on their tweeting behavior by distinguishing between users who tweeted about "National Dog Day" versus those who tweeted about "Respect Tyler Joseph".

\subsubsection{Statistical Analysis}
\label{app:stat}

We provide a comprehensive statistical analysis of the six datasets from three perspectives: (1) graph-scale statistics, (2) micro-topological properties, and (3) class label distributions.

\paragraph{Graph-Level Statistics (Scale and Density)}
Table \ref{tab:graph_stats} reports the distribution of the number of nodes ($|V|$) and hyperedges ($|E|$) per graph, highlighting the structural diversity across benchmarks. The \texttt{IMDB\_wri} datasets consist of relatively small-scale hypergraphs with fewer than 5 hyperedges on average. In stark contrast, \texttt{steam\_player} and \texttt{twitter\_friend} exhibit extremely dense high-order interactions; for instance, \texttt{steam\_player} shows an average hyperedge-to-node ratio ($|E|/|V|$) of approximately 3.4. Meanwhile, the \texttt{IMDB\_dir} datasets demonstrate significant variance in graph size, with hyperedge counts ranging up to 450, thereby challenging the model's ability to generalize across varying scales.

\begin{table*}[h]
    \centering
    \caption{Graph-level statistics of the datasets. $N_{G}$ denotes the total number of hypergraphs. We report the mean and maximum number of nodes ($|V|$) and hyperedges ($|E|$) per graph.}
    \label{tab:graph_stats}
    \resizebox{\textwidth}{!}{%
    \begin{tabular}{l|c|cc|cc}
        \toprule
        \multirow{2}{*}{\textbf{Dataset}} & \multirow{2}{*}{$N_{G}$} & \multicolumn{2}{c|}{\textbf{Nodes per Graph} ($|V|$)} & \multicolumn{2}{c}{\textbf{Hyperedges per Graph} ($|E|$)} \\
         & & Mean & Max & Mean & Max \\
        \midrule
        \texttt{IMDB\_dir\_form} & 1869 & 15.72 & 264 & 39.30 & 450 \\
        \texttt{IMDB\_dir\_genre} & 3393 & 17.33 & 264 & 36.55 & 450 \\
        \texttt{IMDB\_wri\_form} & 374 & 10.05 & 180 & 3.76 & 23 \\
        \texttt{IMDB\_wri\_genre} & 1172 & 12.82 & 273 & 4.46 & 75 \\
        \texttt{steam\_player} & 2048 & 13.83 & 76 & 46.36 & 140 \\
        \texttt{twitter\_friend} & 1310 & 21.65 & 166 & 84.77 & 603 \\
        \bottomrule
    \end{tabular}%
    }
\end{table*}

\paragraph{Topological Statistics (Degree and Cardinality)}
Table \ref{tab:topo_stats} presents the distribution of node degrees and hyperedge sizes. In terms of node degree, \texttt{twitter\_friend} exhibits the highest mean (19.0) and a heavy tail (max 199), indicating the presence of prominent hub users. Regarding hyperedge cardinality, we observe that while the maximum size in \texttt{twitter\_friend} reaches 64, the 99th percentile remains at 16. This distribution provides empirical justification for our symmetric simplicial lifting strategy with a truncation threshold of 20, as it ensures computational efficiency while preserving the vast majority of local high-order structures.

\begin{table*}[h]
    \centering
    \caption{Distribution of Node Degrees and Hyperedge Sizes. Percentiles reported are the 99th percentile.}
    \label{tab:topo_stats}
    \resizebox{\textwidth}{!}{%
    \begin{tabular}{l|ccc|cc}
        \toprule
        \multirow{2}{*}{\textbf{Dataset}} & \multicolumn{3}{c|}{\textbf{Node Degree Distribution}} & \multicolumn{2}{c}{\textbf{Hyperedge Size Distribution}} \\
         & Mean & Max & 99\%-tile & Max & 99\%-tile \\
        \midrule
        \texttt{IMDB\_dir\_form}   & 8.40 & 68  & 42 & 110 & 12 \\
        \texttt{IMDB\_dir\_genre}  & 7.30 & 77  & 39 & 110 & 14 \\
        \texttt{IMDB\_wri\_form}   & 1.70 & 19  & 6  & 179 & 25 \\
        \texttt{IMDB\_wri\_genre}  & 1.70 & 18  & 7  & 266 & 36 \\
        \texttt{steam\_player}     & 15.20 & 20 & 20 & 71  & 15 \\
        \texttt{twitter\_friend}   & 19.00 & 199& 89 & 64  & 16 \\
        \bottomrule
    \end{tabular}%
    }
\end{table*}

\paragraph{Class Label Distribution}
Table \ref{tab:label_stats} details the class balance for each task. The \texttt{IMDB} variants exhibit significant class imbalance—for instance, one class in \texttt{IMDB\_wri\_genre} constitutes only 2.6\% of the data—whereas \texttt{steam\_player} and \texttt{twitter\_friend} are relatively balanced.

\begin{table*}[h]
    \centering
    \caption{Class label distribution. $C$ denotes the number of classes.}
    \label{tab:label_stats}
    \resizebox{\textwidth}{!}{%
    \begin{tabular}{l|c|l}
        \toprule
        \textbf{Dataset} & $\mathbf{C}$ & \textbf{Distribution (Class Index: Ratio)} \\
        \midrule
        \texttt{IMDB\_dir\_form} & 3 & 0: 42.7\%, 1: 36.8\%, 2: 20.5\% \\
        \texttt{IMDB\_dir\_genre} & 3 & 0: 32.6\%, 1: 16.1\%, 2: 51.3\% \\
        \texttt{IMDB\_wri\_form} & 4 & 0: 45.7\%, 1: 20.9\%, 2: 27.3\%, 3: 6.2\% \\
        \texttt{IMDB\_wri\_genre} & 6 & 0: 24.3\%, 1: 21.9\%, 2: 26.2\%, 3: 18.3\%, 4: 2.6\%, 5: 6.7\% \\
        \texttt{steam\_player} & 2 & 0: 50.0\%, 1: 50.0\% \\
        \texttt{twitter\_friend} & 2 & 0: 59.2\%, 1: 40.8\% \\
        \bottomrule
    \end{tabular}%
    }
\end{table*}

\subsection{Implementation Details.}
\label{app:implementation}
All models were implemented using PyTorch and trained via the Adam optimizer. To ensure a fair comparison, we performed a grid search over the following hyperparameter spaces for all baselines and our proposed models: batch size $\{4, 32, 64\}$, hidden dimension $\{32, 64\}$, number of layers $\{2, 3\}$, and initial learning rate $\{10^{-2}, 5 \times 10^{-3}, 10^{-3}\}$. The dropout rate was fixed at 0.5 across all experiments.

For the training schedule, we employed ReduceLROnPlateau, a mechanism that dynamically decays the learning rate when the validation performance saturates, thereby facilitating fine-grained convergence. Specifically, the learning rate was reduced by a factor of 0.5 if the validation loss did not improve for 20 consecutive epochs, with a minimum threshold of $1 \times 10^{-5}$. 

Regarding the Symmetric Simplicial model (S-HIN), we applied a truncation strategy for the lifting process: hyperedges with a cardinality exceeding 20 were not lifted into high-order simplices to mitigate computational complexity. This design choice is empirically supported by our topological analysis in \Cref{tab:topo_stats}. For dense social and e-commerce benchmarks such as \texttt{steam\_player} and \texttt{twitter\_friend}, the 99th percentile of hyperedge sizes is at most 16. Thus, a truncation threshold of 20 preserves the vast majority of higher-order interactions on these challenging datasets, while only discarding a small number of extremely large hyperedges on the IMDB variants.

All experiments were conducted on a server equipped with an Intel Xeon Gold 5220R CPU @ 2.20GHz, 503GB of RAM, and an NVIDIA GeForce RTX 3090 GPU.

% \paragraph{Implementation Details} 
% All models were implemented using PyTorch and optimized via Adam. To ensure a rigorous comparison, we performed a grid search for key hyperparameters, such as learning rate and network depth, for each model. Training convergence was facilitated by a dynamic scheduler that decays the learning rate upon validation loss saturation. Notably, for the symmetric simplicial complex functor, we adopted a truncation strategy where hyperedges exceeding a cardinality of 20 were excluded from the lifting process. This design choice is empirically supported in the topological analysis provided in Appendix~\ref{app:stat}, which indicates that this threshold accommodates the substantial majority of hyperedges across the datasets, thereby establishing a practical balance between high-order expressivity and computational tractability. Comprehensive experimental configurations are detailed in Appendix~\ref{app:implementation}.

\subsection{Computational Complexity of Lifting Functors}
\label{app:complexity}

In this section, we analyze the computational complexity required to construct the graded posets defined by our two primary functors: the \textbf{Incidence Poset Functor} ($I$) and the \textbf{Symmetric Simplicial Complex Functor} ($S$). We focus on the preprocessing phase, where the raw hypergraph structure $H$ is transformed into its poset representations.

Let $H = (\mathcal{V}, \mathcal{E})$ be a hypergraph with $N = |\mathcal{V}|$ nodes and $M = |\mathcal{E}|$ hyperedges. We denote the cardinality of a hyperedge $e$ as $c_e = |e|$ and the degree of a node $v$ as $d_v = \text{deg}(v)$.

\paragraph{Incidence Poset $I(H)$}
The functor $I$ maps a hypergraph to a poset $I(H)=(V \coprod E, \prec)$ representing the incidence structure. The construction primarily involves establishing the relational structure required for message passing. The base partial order $v \prec e$ (for $v \in e$) necessitates iterating over all node-hyperedge pairs, scaling linearly with the total size of the hypergraph, $\mathcal{O}(\sum_{e \in \mathcal{E}} c_e)$. Furthermore, to facilitate information flow between peer elements, we construct auxiliary adjacencies. The clique expansion, which connects nodes within a hyperedge $e$, requires $\mathcal{O}(c_e^2)$ operations. Similarly, the line graph expansion, connecting hyperedges sharing a node $v$, scales with $\mathcal{O}(d_v^2)$. Consequently, the total complexity is given by $\mathcal{O}\left( \sum_{e \in \mathcal{E}} c_e^2 + \sum_{v \in \mathcal{V}} d_v^2 \right)$. While this complexity is quadratic with respect to hyperedge cardinalities and node degrees, it remains tractable for sparse real-world hypergraphs, allowing $I(H)$ to be constructed without structural truncation.

\paragraph{Symmetric Simplicial Complex Poset $S(H)$}
The functor $S$ maps a hypergraph to a poset $S(H)$ composed of structured simplices. The construction of the $n$-th level set $S(H)_n$ involves a combinatorial enumeration of subsets. Specifically, constructing $S(H)_n$ requires generating $\binom{c_e}{n+1}$ simplices for each hyperedge. For a standard configuration up to 2-simplices (triangles, $n=2$), the complexity is cubic, expressed as $\mathcal{T}_{gen} = \mathcal{O}\left( \sum_{e \in \mathcal{E}} c_e^3 \right)$. This polynomial scaling poses significant challenges for datasets containing hyperedges with large cardinality. For instance, in \texttt{IMDB\_wri\_genre}, the maximum hyperedge size reaches 266. Explicitly generating all triangles for such a hyperedge would produce millions of elements from a single interaction, rendering the process computationally prohibitive. To ensure tractability, we implement a \textit{truncated evaluation} of $S$ with a threshold $\tau = 20$, restricting the domain to hyperedges where $c_e \le \tau$. Under this constraint, the complexity is bounded by
$$ \mathcal{T}_{S(H)} \approx \mathcal{O}\left( \sum_{e \in \mathcal{E}, c_e \le \tau} c_e^3 \right) \approx \mathcal{O}\left( M \cdot \tau^3 \right). $$
By treating $\tau$ as a constant, the construction of $S(H)$ becomes linear with respect to the number of hyperedges $M$. This analysis justifies our truncation strategy, ensuring efficient poset construction while preserving the high-order geometry for the vast majority of the data.

% \crefalias{section}{appendix} % uncomment if you are using cleveref

\end{document}